\documentclass[conference]{IEEEtran}
\IEEEoverridecommandlockouts
\usepackage{amsmath, amssymb, amsfonts, amsthm}  
\DeclareMathOperator*{\argmin}{arg\,min}          
\theoremstyle{remark}
\newtheorem{remark}{Remark}

\theoremstyle{definition}
\newtheorem{definition}{Definition}
\newtheorem{example}{Example}

\theoremstyle{plain}
\newtheorem{theorem}{Theorem}

\usepackage{weiwAlgorithm} 

\usepackage{graphicx}              
\usepackage{subcaption}            
\usepackage{multirow}              
\usepackage{textcomp}              
\usepackage{url}                   
\usepackage{comment}               

\usepackage{enumerate}             
\usepackage{enumitem}
\usepackage{balance}

\usepackage{bbm}                   
\usepackage{pifont}                
\usepackage{utfsym}                
\usepackage{fontawesome}           
\usepackage{xcolor}                

\usepackage[normalem]{ulem}        

\newcommand{\revise}[1]{\textcolor{blue}{#1}}
\renewcommand{\revise}[1]{#1} 
\newcommand{\myblue}[1]{{\color{blue} #1}\xspace}
\renewcommand{\myblue}[1]{#1}
\newcommand{\red}[1]{{\color{red} #1}\xspace}
\renewcommand{\red}[1]{#1}
\usepackage{cite}                  
\usepackage[colorlinks,citecolor=black,linkcolor=black,urlcolor=black]{hyperref}

\usepackage{booktabs,tabularx,threeparttable}
\usepackage{xcolor}
\def\BibTeX{{\rm B\kern-.05em{\sc i\kern-.025em b}\kern-.08em
    T\kern-.1667em\lower.7ex\hbox{E}\kern-.125emX}}

\newcommand{\Cmt}[1]{
\vspace{0.6mm} \tcp*[h]{\underline{#1}}\\ \vspace{0.6mm}
}
\newcommand{\myparagraph}[1]{\vspace{0.5mm} \noindent \textbf{#1}.}
\newcommand{\myparagraphexp}[1]{\noindent \underline{#1.}}

\begin{document}

\title{C$^{2}$TC: A Training-Free Framework for Efficient Tabular Data Condensation}
\DeclareRobustCommand*{\IEEEauthorrefmark}[1]{%
  \raisebox{0pt}[0pt][0pt]{\textsuperscript{\footnotesize\ensuremath{#1}}}
}

\author{\IEEEauthorblockN{Sijia Xu\IEEEauthorrefmark{1}, Fan Li\IEEEauthorrefmark{1 \ast \thanks{\textsuperscript{$\ast$}Corresponding author}}, Xiaoyang Wang\IEEEauthorrefmark{1}, Zhengyi Yang\IEEEauthorrefmark{1},
and Xuemin Lin\IEEEauthorrefmark{2}}

\IEEEauthorblockA{
\IEEEauthorrefmark{1}University of New South Wales, Australia 
\IEEEauthorrefmark{2}Shanghai Jiao Tong University, China \\
\{sijia.xu, fan.li8, xiaoyang.wang1, zhengyi.yang\}@unsw.edu.au, xuemin.lin@sjtu.edu.cn
}
}


\maketitle


\begin{abstract}
\revise{
Tabular data is the primary data format in industrial relational databases, underpinning modern data analytics and decision-making.
However, the increasing scale of tabular data poses significant computational and storage challenges to learning-based analytical systems. This highlights the need for data-efficient learning, which enables effective model training and generalization using substantially fewer samples. 
Dataset condensation (DC) has emerged as a promising data-centric paradigm that synthesizes small yet informative datasets to preserve data utility while reducing storage and training costs. 
However, existing DC methods are computationally intensive due to reliance on complex gradient-based optimization. Moreover, they often overlook key characteristics of tabular data, such as heterogeneous features and class imbalance.
To address these limitations, we introduce $\text{C}^{2}\text{TC}$ (\underline{C}lass-Adaptive \underline{C}lustering for \underline{T}abular \underline{C}ondensation), the first training-free tabular dataset condensation framework that jointly optimizes class allocation and feature representation, enabling efficient and scalable condensation.
Specifically, we reformulate the dataset condensation objective into a novel class-adaptive cluster allocation problem (CCAP), which eliminates costly training and integrates adaptive label allocation to handle class imbalance.
To solve the NP-hard CCAP, we develop HFILS, a heuristic local search that alternates between soft allocation and class-wise clustering to efficiently obtain high-quality solutions.
Moreover, a hybrid categorical feature encoding (HCFE) is proposed for semantics-preserving clustering of heterogeneous discrete attributes.
Extensive experiments on 10 real-world datasets demonstrate that $\text{C}^{2}\text{TC}$ improves efficiency by at least 2 orders of magnitude over state-of-the-art baselines, while achieving superior downstream performance.}
\end{abstract}

\maketitle



\section{Introduction}
\label{intro}

\revise{Tabular data, represented as rows and columns and comprising heterogeneous feature types (e.g., integer, floating point, and string), is the most widely used data format in relational databases and enterprise data warehouses~\cite{OLAP,li2025efficient,yang2024parallel,tang2025tabular}. It serves as the backbone of modern analytical workflows, increasingly integrated with machine learning models for prediction and automation, supporting a wide range of critical applications, such as financial risk assessment~\cite{MCDB-R,li2024adarisk}, recommendation systems~\cite{sarwat2013recdb,wang2026beyond}, and medical diagnosis~\cite{CohortNet}.
Recently, to better extract value from tabular data, deep tabular learning~\cite{deng2022turl,Observatory,Creating}, which eliminates the need for manual feature engineering and enables end-to-end representation learning, has received increasing attention as an effective paradigm for enhancing predictive performance in data-driven systems.
}

\revise{However, the increasing scale of tabular datasets imposes substantial computational and storage burdens on leaning-based data analytics systems, especially in resource-constrained scenarios.
To illustrate, training popular deep tabular models on the Epsilon dataset~\cite{GLMNET}, which contains only $5 \times 10^{5}$ samples and $2 \times 10^{3}$ features, suffers from severe efficiency bottleneck on a single NVIDIA A800 GPU: TabR~\cite{gorishniy2024tabr} requires 8 hours, while FT-Transformer~\cite{gorishniy2021revisiting} takes more than two days to complete. Beyond the computational expense, handling large-scale tables also introduces significant I/O overhead, further challenging the capacity and throughput of modern data management pipelines. The problem becomes even more pronounced in data-intensive scenarios, such as neural architecture search~\cite{li2021autood,elsken2019neural} and continual learning~\cite{de2021continual,chu2023continual}, where datasets must be repeatedly loaded, cached, and processed to train and update hundreds of models. Effectively managing large-scale tabular data while ensuring efficient and scalable learning has therefore become a critical challenge for modern data analytics systems. This raises a natural question: \textit{can we reduce the size of tabular datasets while preserving their utility for model training?}}

\revise{A straightforward and effective approach to achieve data reduction is through coreset selection~\cite{hadar2024datamap,camel,wang2022coresets}, which constructs a representative subset of full datasets while preserving their essential properties for downstream data processing and learning tasks.}
%
However, coreset selection methods are often guided by intermediate optimization objectives that may not align with downstream tasks, resulting in suboptimal performance and limited generalization. 
To overcome this limitation, 
\revise{dataset condensation (DC)~\cite{zhao2021dataset,cazenavette2022dataset,zhao2023dataset}, which aims to synthesize compact yet informative datasets that preserve the training utility of their large-scale counterparts, has emerged as an effective data-centric paradigm for efficient learning.}
DC methods typically optimize synthetic data by aligning surrogate objectives between the condensed and original datasets during model training, thereby improving transferability and preserving utility.
For instance, parameter-matching approaches align gradient directions~\cite{zhao2021dataset,lee2022dataset,jiang2023delving} or training trajectories~\cite{cazenavette2022dataset,guotowards,zhong2025towards}, whereas distribution-matching strategies~\cite{zhao2023dataset,zhao2023improved,malakshan2025decomposed} aim to match class-conditional feature embeddings between the original and the synthetic one. 
\myblue{Recently, for efficient graph condensation, \cite{gao2025rethinking} proposes a novel framework
that performs class partitioning on structure-aware representations.}

\noindent\textbf{Limitations.} 
\revise{While dataset condensation has shown effectiveness across various modalities such as images~\cite{zhao2023dataset,cazenavette2022dataset}, graphs~\cite{sun2024gc,yang2025stgcond,li2025tcgu}, and time series~\cite{miao2024less}, research on tabular data remains largely unexplored.
Applying existing DC techniques to tabular data suffers from three key limitations:}

\noindent \revise{\underline{\textit{L1: 
High computational cost}}: 
Existing condensation strategies are computationally intensive and poorly scalable due to their reliance on complex gradient-based objectives. Parameter matching methods require long-range model training with repeated gradient or parameter alignment, while distribution matching approaches, despite removing explicit model training, still involve costly gradient-based optimization in the embedding space. These limitations make them impractical for large-scale tables pervasive in industrial databases.}

\noindent  \underline{\textit{L2: Utility loss under class imbalance}}: Existing DC methods typically adopt two pre-defined label \revise{allocation} strategies: ratio-preserving allocation (Ratio),
\revise{which maintains the original label distribution in the condensed data,}
and fixed instances per class (FIPC), which assigns an equal and fixed number of samples to each class. 
However, real-world tabular datasets are often highly imbalanced, with minority classes at least an order of magnitude smaller than majority classes~\cite{liu2020self,he2009learning}.

\begin{example}
\revise{To illustrate the effect of such imbalance, in Figure~\ref{fig:class_alloc_comparison},} 
we condense two imbalanced datasets Adult (AD)~\cite{kohavi1996scaling} and Covertype (CO)\cite{blackard1999comparative} at a 0.1\% reduction ratio under both strategies using three representative DC methods, and evaluate utility by training an MLP on the condensed datasets. Detailed experimental setups can be found in Section~\ref{exp_setups}.
Our observations are twofold: $(i)$ Ratio yields higher overall predictive performance (accuracy) than FIPC, but with larger disparities in per-class performance (lower Macro-F1);
$(ii)$ Macro-F1 drops much more sharply than accuracy after condensation, indicating that dataset condensation exacerbates performance imbalance across classes. These results indicate that neither strategy achieves a satisfactory trade-off between global performance and class-wise fairness on imbalanced datasets, thus diminishing the utility of the condensed data.
\end{example}

\noindent  \underline{\textit{L3: Information loss in heterogeneous features}}: Tabular data is characterized by heterogeneous feature types, including numerical and categorical attributes. The latter often appear as boolean, integer, or string values and are crucial for representing discrete semantics. However, existing DC methods overlook the effective encoding and use of categorical features, leading to information loss and degraded condensation quality.

\begin{figure}[t]
  \centering
  \includegraphics[width=0.75\linewidth]{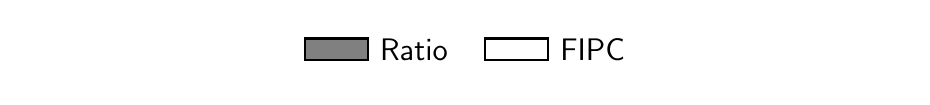}
  \begin{subfigure}[b]{0.48\columnwidth}
    \centering
    \includegraphics[width=0.92\linewidth]{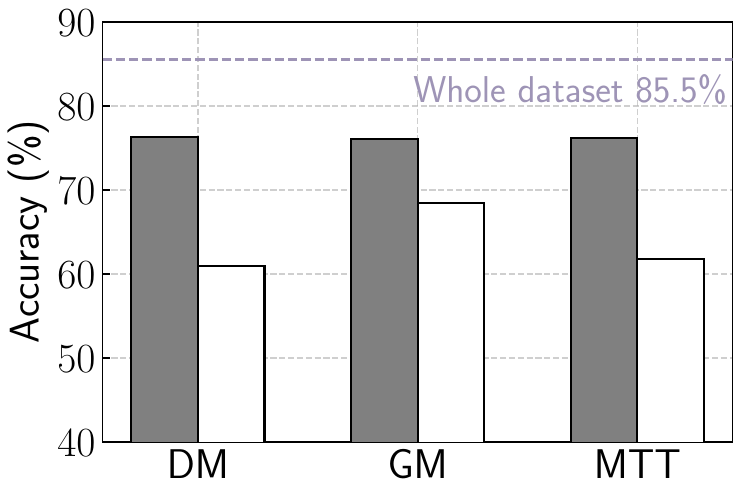}
    \caption{AD Accuracy}
  \end{subfigure}
  \hfill
  \begin{subfigure}[b]{0.48\columnwidth}
    \centering
    \includegraphics[width=0.92\linewidth]{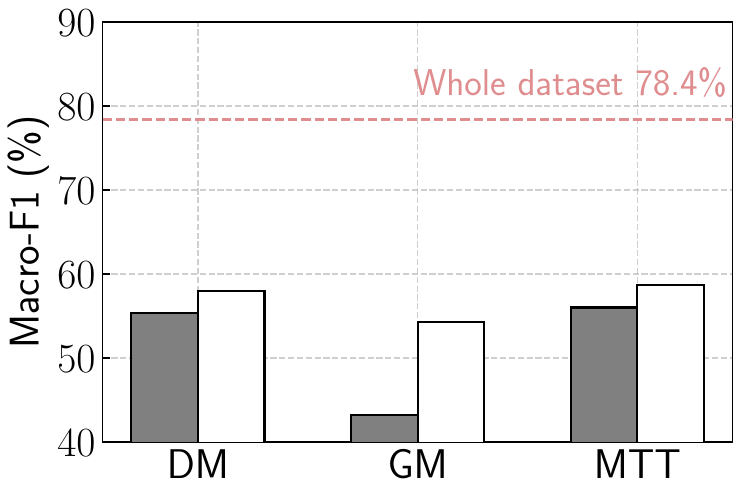}
    \caption{AD Macro-F1}
  \end{subfigure}


  \begin{subfigure}[b]{0.48\columnwidth}
    \centering
    \includegraphics[width=0.92\linewidth]{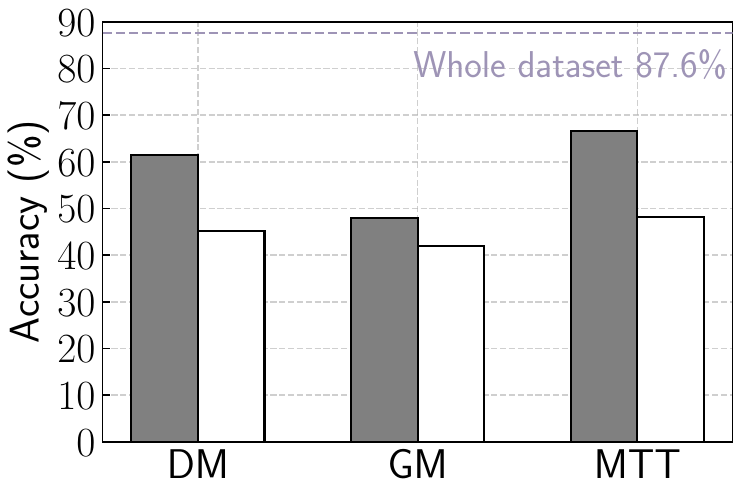}
    \caption{CO Accuracy}
  \end{subfigure}
  \hfill
  \begin{subfigure}[b]{0.48\columnwidth}
    \centering
    \includegraphics[width=0.92\linewidth]{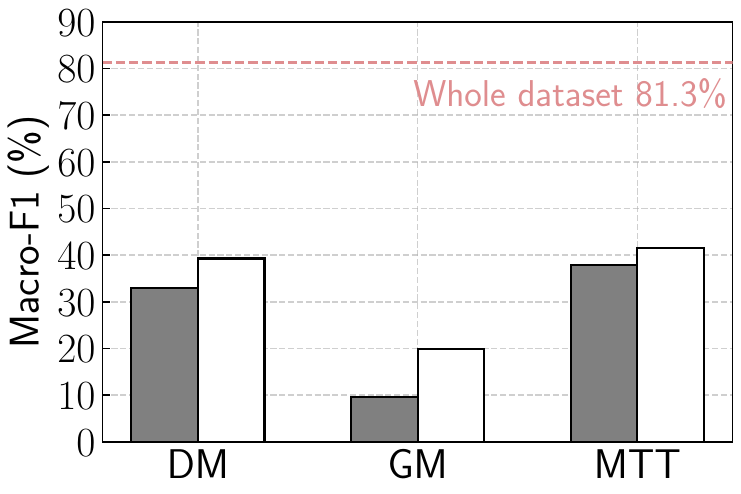}
    \caption{CO Macro-F1}
  \end{subfigure}

  \caption{Comparison of label \revise{allocation} strategies.}
  \label{fig:class_alloc_comparison}
\end{figure}

\noindent\textbf{Contributions.} 
\revise{ 
To address the above issues, we introduce $\text{C}^{2}\text{TC}$, the \textit{first} training-free and class-adaptive tabular data condensation framework. 
\myblue{Motivated by the recent analysis in~\cite{gao2025rethinking}},
we reformulate the optimization objective of data condensation to enable efficient and class-adaptive data reduction, which consists of three steps: $(i)$ We theoretically revisit existing DC methods and unify their objectives into a class-level matching paradigm that aligns each class’s prototype embeddings between the condensed and original datasets to preserve class-level representation consistency. $(ii)$ We extend this paradigm to a class-to-sample matching scheme for finer-grained condensation, which we reformulate as a class partitioning task in the raw feature space that can be efficiently solved using standard clustering algorithms (e.g., K-means \cite{kmeans++,macqueen1967multivariate}). This formulation removes the need for gradient-based training and significantly reduces computational overhead (\textit{L1}). $(iii)$ Based on this simplified objective, we incorporate dynamic label allocation to address class imbalance, yielding a novel class-adaptive cluster allocation problem (CCAP) that naturally supports label-adaptive condensation (\textit{L2}).
However, the CCAP is proven to be an NP-hard problem.
To efficiently solve this challenging problem, we design a heuristic first-improvement local search (HFILS) guided by the \textit{elbow method}~\cite{kodinariya2013review}. 
HFILS alternates between soft allocation and class-wise clustering, updating the current label allocation whenever the overall clustering loss decreases and thereby achieving fast convergence to high-quality feasible solutions.
Moreover, a hybrid categorical feature encoding (HCFE) is developed to transform heterogeneous discrete features into a unified numerical space, enabling semantic-preserving clustering during optimization (\textit{L3}). 
Extensive experiments on 10 real-world tabular datasets demonstrate the superior efficiency and effectiveness of $\text{C}^{2}\text{TC}$ compared with state-of-the-art DC baselines.
}
The main contributions of the paper are summarized as follows.

\begin{itemize}[leftmargin=*]
  \item 
  \revise{
    We propose $\text{C}^{2}\text{TC}$, the \textbf{first} training-free tabular data condensation framework with label-adaptive design, enabling efficient and scalable deep tabular learning workflows.
  }
  \item{
  \revise{
  We reformulate the complex gradient-based data condensation objectives into a \textbf{training-free} and \textbf{class-adaptive} combinatorial optimization problem (i.e., CCAP), which facilitates a more efficient and scalable condensation while supporting dynamically adaptive label allocation.
  }
  }
  \item{
  \revise{
  To solve the NP-hard CCAP, a heuristic first-improvement local search (HFILS) is proposed to efficiently search for high-quality label allocations.
  Moreover, we design a hybrid categorical feature encoding (HCFE) module to preserve semantic information in heterogeneous features, ensuring effective clustering.
  }
  }
  \item{
  \revise{
  Our approach is extensively evaluated on 10 real-world datasets. Experimental results show that $\text{C}^{2}\text{TC}$ achieves at least \textbf{2 orders of magnitude} speedup over state-of-the-art DC methods, while delivering superior downstream performance in both balanced and imbalanced settings.
  }
  }
\end{itemize}


\myblue{\textit{Note that, due to space limitations, all proofs, some related work, experimental settings, and additional evaluations can be found in the online appendix~\cite{appendix}.}}

\section{Related Work}

\myparagraph{Dataset condensation}  
Dataset condensation has recently become a popular data-centric solution for efficient deep learning by generating compact yet informative synthetic datasets preserving the training utility of the original data.
Existing DC studies mainly fall into two categories.  
\textit{Parameter matching} methods align parameter updates between real and synthetic data, either in a single step or across multiple steps.  
Single-step strategies, such as Gradient Matching~\cite{zhao2021dataset,lee2022dataset,jiang2023delving,kim2022dataset}, match gradients within one update, simplifying optimization but remaining constrained by bi-level training.  
Multi-step frameworks like Matching Training Trajectories~\cite{cazenavette2022dataset,guotowards,zhong2025towards} perform parameter alignment over multiple update iterations, achieving higher fidelity but requiring numerous expert networks, which makes the process highly time- and resource-intensive.
\textit{Distribution matching} methods eliminate redundant model training by aligning feature-level statistics between real and synthetic data through model embeddings, such as class-wise mean matching~\cite{zhao2023dataset} or style–content alignment~\cite{malakshan2025decomposed}.
However, they remain inherently model-dependent, leading to limited cross-architecture generalization, and still require gradient computation and backpropagation to update the synthetic data, incurring considerable computational overhead.
Despite recent progress, existing dataset condensation methods overlook the distinctive characteristics of tabular data, such as heterogeneous features and class imbalance, and struggle to scale to large datasets, which is particularly critical since tabular data, as the mainstream format in relational databases, are often large in scale. 
\red{Recently, Gao et al. [1] proposed training-free graph condensation through theoretical analysis. However, our work fundamentally differs in problem definition, theoretical framework, and label allocation strategies, with specific comparisons detailed in Appendix~\ref{cgc}~\cite{appendix}.}

A more detailed discussion of coreset selection methods is provided in the Appendix~\ref{cs_rw}~\cite{appendix}.

\section{Preliminaries}

\subsection{Tabular Data Condensation}

\myparagraph{Notations} Consider a general classification problem on tabular data, we denote the dataset as
$\mathcal{T}=\{(x_{i},y_{i})\}_{i=1}^{N}$, which can be viewed as a table with 
$N$ rows and $F$ columns. Each instance $x_{i} \in \mathbb{R}^{F}$ corresponds to a row of the table with 
$F$-dimensional features, and each label $y_{i} \in \mathbb{R}^{C}$ is a one-hot vector indicating the ground-truth class from the set $\mathcal{C} = \{0,1,...C-1\}$. 
Let $x_{i,j}$ be the $j$-th feature of the instance $x_{i}$. This feature can be either a numerical (continuous) attribute, denoted as $x_{i,j}^{num} \in \mathbb{R}$, or
a categorical (discrete) value $x_{i,j}^{cat}$. In practice, categorical features are typically stored as strings or integers in the raw data.
We denote $X=[x_{1},x_{2},...,x_{N}]^{T} \in \mathbb{R}^{N \times F}$ as the feature matrix and $Y=[y_{1},y_{2},...,y_{N}]^{T} \in \mathbb{R}^{N \times C}$ is the label matrix.
Let $X_{i}$ be the set of samples belonging to class $i$ in $\mathcal{T}$, and let $n_{i}=|X_{i}|$ denote its cardinality, i.e., the number of samples in class $i$. Accordingly, $\{n_{i}\}_{i=0}^{C-1}$ represent the per-class sample sizes.

\myparagraph{Problem definition} Given the input large-scale tabular dataset $\mathcal{T}=(X,Y)$, the goal of tabular data condensation is to generate a downsized table $\mathcal{S}=(X^{\prime},Y^{\prime})$ with $X^{\prime} \in \mathbb{R}^{N^{\prime} \times F}$ and $Y^{\prime} \in \mathbb{R}^{N^{\prime} \times C}$ ($N^{\prime} \ll N$), such that the deep tabular model trained 
on $\mathcal{S}$ can achieve comparable performance to that trained on the original 
$\mathcal{T}$.
Formally, tabular data condensation can be defined as a bi-level optimization problem:
\begin{equation}
\label{eq:problem_definition}
\begin{split}
       \min_{\mathcal{S}} \; &\mathcal{L}(f_{\theta_{\mathcal{S}}}(X),Y) \\
        \text{s.t.}  \quad  \theta_{\mathcal{S}} = & \argmin_{\theta} \; \mathcal{L}(f_{\theta}(X^{\prime}),Y^{\prime}),
\end{split}
\end{equation}
where $\mathcal{L}$ is the loss function. 
$f_{\theta}$ represents the deep tabular model parameterized with $\theta$, and $\theta_{\mathcal{S}}$ denotes the parameters of the model trained on the downsized synthetic table. 


\subsection{Dataset Condensation Methods}
\label{sec:3.2}
\myblue{To address the dataset condensation problem formulated in Eq.~(1), 
existing methods generally treat the synthetic dataset $\mathcal{S}$ as a set of learnable variables and optimize it to approximate the training behavior of the original dataset.
The condensation process typically proceeds in two stages.
First, $\mathcal{S}$ is initialized using strategies such as random noise (e.g., Gaussian) or by sampling a random subset from the original dataset $\mathcal{T}$.
In practice, the subset-based initialization is more commonly used, since it offers a more stable starting point for optimization~\cite{cui2022dc,yu2023dataset}.
Following initialization, $\mathcal{S}$ is iteratively updated via a relay model by minimizing a matching objective, which enforces consistency between the training dynamics of models trained on the original and condensed datasets (i.e., parameter matching), or alignment in the embedding space (i.e., distribution matching). These matching losses are backpropagated to update the learnable condensed data $\mathcal{S}$ through gradient-based optimization, thereby preserving the training utility of the original large-scale data.
Existing dataset condensation methods primarily distinguish themselves by how the matching objective is formulated. We next present the two most representative optimization strategies: parameter matching and distribution matching.
}

\myparagraph{Parameter matching (PM)} 
The core idea of parameter matching is that the parameters of deep models should be well-aligned when trained on either $\mathcal{T}$ or $\mathcal{S}$.
To achieve this, \textit{Gradient Matching} (GM) based methods~\cite{zhao2021dataset,lee2022dataset,jiang2023delving,kim2022dataset} have been proposed, which align the gradient directions of the loss function during optimization:
\begin{equation}
\label{eq:gm_loss}
\mathcal{L}_{GM} = \mathbb{E}_{\theta \sim \Phi} \left[ \sum_{i=0}^{C-1} \mathcal{D} \left( \nabla_{\theta} \mathcal{L}_i^{\mathcal{S}}, \nabla_{\theta} \mathcal{L}_i^{\mathcal{T}} \right) \right],
\end{equation}
where $\Phi$ denotes the distribution of relay model parameters $\theta$.
$\mathcal{D}(\cdot,\cdot)$ denotes a distance function, while $\mathcal{L}_i^{\mathcal{S}}$ and $\mathcal{L}_i^{\mathcal{T}}$ represent the class-wise classification losses on the synthetic dataset $\mathcal{S}$ and the real dataset $\mathcal{T}$, respectively. For simplicity, the update of the relay model in the inner loop is omitted. 
However, GM-based methods suffer from error accumulation when models are trained on $\mathcal{S}$ for multiple steps, leading to suboptimal performance. 
To address this issue, \textit{Matching Training Trajectories} (MTT)~\cite{cazenavette2022dataset} was introduced, which performs long-range parameter matching between training on synthetic and real data, and has attracted considerable research attention~\cite{guotowards,liu2024dataset,zhong2025towards}.
This strategy first trains deep models on the original dataset to obtain and store expert parameter trajectories. During condensation, the synthetic dataset is optimized via multi-step parameter matching, which aligns model parameters trained on synthetic data with pre-recorded expert trajectories. 



\begin{equation}
\begin{aligned}
\mathcal{L}_{\mathit{MTT}} = \mathbb{E}_{\theta^{(0)} \sim \Phi}& \left[ \mathcal{D} \left( \theta_{\mathcal{S}}^{(T_s)}, \theta_{\mathcal{T}}^{(T_t)} \right) \right], \\
\text{where} \quad \mathcal{D}(\theta_{\mathcal{S}}^{(T_s)}, \theta_{\mathcal{T}}^{(T_t)}) 
&= \frac{ \| \theta_{\mathcal{S}}^{(T_s)} - \theta_{\mathcal{T}}^{(T_t)} \|^2 }
       { \| \theta_{\mathcal{T}}^{(T_t)} - \theta^{(0)} \|^2 }.
\end{aligned}
\end{equation}
Here, $\theta^{(0)}$ is the shared initialization point. 
Specifically, the model is optimized for $T_s$ steps on $\mathcal{S}$ and $T_t$ steps on $\mathcal{T}$, and the matching loss drives the synthetic trajectory to align the expert trajectory at their respective endpoints.
However, avoiding overfitting to a single initialization of the relay model necessitates training hundreds of deep neural networks on $\mathcal{T}$ to extract trajectories, incurring heavy computational costs.

\myparagraph{Distribution matching (DM)} DM proposes to align the class-conditioned feature distributions between the original and condensed datasets. 
This strategy is first introduced in \cite{zhao2023dataset} by minimizing the discrepancy
between class prototypes (i.e., class-wise mean feature embeddings):
\begin{equation}
\label{eq:dm_loss}
\mathcal{L}_{\mathit{DM}} = \mathbb{E}_{\theta \sim \Phi} \left[ \left\| \mathbf{P}  \Phi_{\theta}(X) - \mathbf{P}' \Phi_{\theta}(X') \right\|^2 \right],
\end{equation}
where the matrices $\mathbf{P} \in \mathbb{R}^{C \times N}$ and $\mathbf{P}' \in \mathbb{R}^{C \times N'}$ are class-wise mean pooling operators defined as:
\[
\mathbf{P}_{i,j} = 
\begin{cases}
\frac{1}{n_i}, & \text{if } y_j = i \\
0, & \text{otherwise}
\end{cases}, \quad
\mathbf{P}'_{i,j} = 
\begin{cases}
\frac{1}{n'_i}, & \text{if } y'_j = i \\
0, & \text{otherwise}
\end{cases}.
\]
Here, $n_i$ and $n'_i$ denote the number of samples in class $i$ from the real dataset $\mathcal{T}$ and the synthetic dataset $\mathcal{S}$, respectively.
By eliminating the need to compute model gradients, DM enables a more efficient and flexible condensation process, which has made it prevalent in recent DC studies. 
For example, IDM~\cite{zhao2023improved} further extends this idea by leveraging validated embeddings and incorporating class-aware regularization, while Malakshan et al.~\cite{malakshan2025decomposed} decompose the DM-based image data condensation into style and content matching for better distillation.

\section{Training-free \revise{Framework} with \\ Class-adaptive Partition}
\label{sec:objective}

\subsection{Bridging Condensation Objectives}
\label{sec:4.1}

In this section, we theoretically reveal the underlying connections among the matching objectives of the aforementioned DC methods (Section~\ref{sec:3.2}).
In our analysis, we consider the ridge regression optimization objective for model training on the original dataset $\mathcal{T}$ and the synthetic dataset $\mathcal{S}$ as follows:
\begin{equation}
\begin{split}
  \mathcal{L}^{\mathcal{T}} &= \frac{1}{2} \left\| \Phi_{\theta}(X)\mathbf{W} - Y \right\|^2 + \lambda \|\mathbf{W}\|^2, \\
  \mathcal{L}^{\mathcal{S}} &= \frac{1}{2} \left\| \Phi_{\theta}(X^{\prime})\mathbf{W} - Y^{\prime} \right\|^2 + \lambda \|\mathbf{W}\|^2,
\end{split}
\end{equation}
where $\mathbf{W} \in \mathbb{R}^{d \times C}$ is the parameter of the classification layer and $\Phi_{\theta}: \mathbb{R}^{F} \rightarrow \mathbb{R}^{d}$ is the feature encoder parameterized by $\theta$, with $d$ denoting the embedding dimension. For simplicity, we assume that only $\mathbf{W}$ is updated during condensation, while $\Phi_{\theta}$ remains fixed. 
\myblue{We adopt ridge regression as the training objective for theoretical analysis due to its analytical tractability and interpretability. Its closed-form solution and explicit gradients allow us to transparently analyze optimization dynamics and establish a clear connection between distribution matching and gradient matching objectives. Moreover, ridge regression provides a meaningful analytical lens for tabular data, where features correspond to semantically well-defined attributes~\cite{khurana2021semantic} and second-order statistics (e.g., covariance) capture essential global feature correlations~\cite{hastie2009elements,bishop2006pattern}. As the solution of ridge regression is explicitly determined by these statistics, it enables principled analysis of how condensation objectives preserve fundamental feature relationships. This choice is made for theoretical clarity rather than to position ridge regression as an optimal predictor.}
The following theorem establishes the connection between gradient- and trajectory-matching objectives, the two key formulations of the parameter-matching paradigm.

\begin{theorem}
\label{thm:gm-mtt}
Let $\mathcal{T}$ and $\mathcal{S}$ denote the real and synthetic datasets, respectively. Suppose the model is initialized with shared parameters $\mathbf{W}_{0}$, and trained separately on both datasets for $T$ steps using gradient descent with a fixed
learning rate $\eta > 0$. Then, after $T$ steps, the parameter deviation is bounded by:
\begin{equation}
\|\mathbf{W}_{T}^{\mathcal{S}} - \mathbf{W}_{T}^{\mathcal{T}}\| \leq \eta \sum_{t=0}^{T-1} \|\nabla \mathcal{L}^{\mathcal{S}}(\mathbf{W}_{t}^{\mathcal{S}}) - \nabla \mathcal{L}^{\mathcal{T}}(\mathbf{W}_{t}^{\mathcal{T}})\|,
\end{equation}
where $\nabla \mathcal{L}^{\mathcal{S}}(\mathbf{W}_{t}^{\mathcal{S}})$ and $\nabla \mathcal{L}^{\mathcal{S}}(\mathbf{W}_{t}^{\mathcal{T}})$ denote $t$-th step gradient on $\mathcal{S}$ and $\mathcal{T}$, respectively.
\end{theorem}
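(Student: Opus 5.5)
The bound follows by unrolling the two gradient-descent recursions and applying the triangle inequality; no property of ridge regression beyond the existence of the gradients is needed. The two trajectories start from the same point, $\mathbf{W}_0^{\mathcal{S}}=\mathbf{W}_0^{\mathcal{T}}=\mathbf{W}_0$, and are updated by
\begin{equation*}
\mathbf{W}_{t+1}^{\mathcal{S}}=\mathbf{W}_{t}^{\mathcal{S}}-\eta\nabla\mathcal{L}^{\mathcal{S}}(\mathbf{W}_{t}^{\mathcal{S}}),\qquad
\mathbf{W}_{t+1}^{\mathcal{T}}=\mathbf{W}_{t}^{\mathcal{T}}-\eta\nabla\mathcal{L}^{\mathcal{T}}(\mathbf{W}_{t}^{\mathcal{T}}).
\end{equation*}

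First, unroll each recursion from step $0$ to step $T$:
\begin{equation*}
\mathbf{W}_{T}^{\mathcal{S}}=\mathbf{W}_0-\eta\sum_{t=0}^{T-1}\nabla\mathcal{L}^{\mathcal{S}}(\mathbf{W}_{t}^{\mathcal{S}}),\qquad
\mathbf{W}_{T}^{\mathcal{T}}=\mathbf{W}_0-\eta\sum_{t=0}^{T-1}\nabla\mathcal{L}^{\mathcal{T}}(\mathbf{W}_{t}^{\mathcal{T}}).
\end{equation*}

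Next, subtract the two expressions. The shared initialization $\mathbf{W}_0$ cancels, which gives
\begin{equation*}
\mathbf{W}_{T}^{\mathcal{S}}-\mathbf{W}_{T}^{\mathcal{T}}=-\eta\sum_{t=0}^{T-1}\bigl(\nabla\mathcal{L}^{\mathcal{S}}(\mathbf{W}_{t}^{\mathcal{S}})-\nabla\mathcal{L}^{\mathcal{T}}(\mathbf{W}_{t}^{\mathcal{T}})\bigr).
\end{equation*}

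Finally, take the norm (Frobenius, or any norm on $\mathbb{R}^{d\times C}$). Absolute homogeneity pulls out the factor $\eta>0$, and the triangle inequality moves the norm inside the sum. This yields exactly the stated bound.

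Equivalently, one can argue by induction on $T$. Writing $\Delta_t=\mathbf{W}_{t}^{\mathcal{S}}-\mathbf{W}_{t}^{\mathcal{T}}$, we have $\Delta_{t+1}=\Delta_t-\eta\bigl(\nabla\mathcal{L}^{\mathcal{S}}(\mathbf{W}_{t}^{\mathcal{S}})-\nabla\mathcal{L}^{\mathcal{T}}(\mathbf{W}_{t}^{\mathcal{T}})\bigr)$ with $\Delta_0=0$. Applying the triangle inequality at each step and summing gives the same result.

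There is no genuine obstacle. The only point that needs care is that the gradients in the statement are evaluated along each model's \emph{own} trajectory ($\mathbf{W}_t^{\mathcal{S}}$ for $\mathcal{S}$ and $\mathbf{W}_t^{\mathcal{T}}$ for $\mathcal{T}$). The statement's second gradient is written as $\nabla\mathcal{L}^{\mathcal{S}}(\mathbf{W}_{t}^{\mathcal{T}})$, which I read as a typo for $\nabla\mathcal{L}^{\mathcal{T}}(\mathbf{W}_{t}^{\mathcal{T}})$, as in the displayed inequality. With this reading no Lipschitz or smoothness assumption on the ridge loss is required.

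If one then wanted to relate the bound to gradient matching evaluated at a common point, one would split each term as $\nabla\mathcal{L}^{\mathcal{S}}(\mathbf{W}_{t}^{\mathcal{S}})-\nabla\mathcal{L}^{\mathcal{S}}(\mathbf{W}_{t}^{\mathcal{T}})$ plus $\nabla\mathcal{L}^{\mathcal{S}}(\mathbf{W}_{t}^{\mathcal{T}})-\nabla\mathcal{L}^{\mathcal{T}}(\mathbf{W}_{t}^{\mathcal{T}})$. The first piece is controlled using the Lipschitz constant of the ridge gradient, $\|\Phi_\theta(X')^{\top}\Phi_\theta(X')\|+2\lambda$, followed by a discrete Grönwall argument. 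That refinement is not needed for the theorem as stated.
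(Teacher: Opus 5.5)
Your proposal is correct and essentially matches the paper's proof. The paper writes the one-step difference $\mathbf{W}^{\mathcal{S}}_{t+1}-\mathbf{W}^{\mathcal{T}}_{t+1}$, applies the triangle inequality, and unrolls from $t=0$ using $\mathbf{W}_0^{\mathcal{S}}=\mathbf{W}_0^{\mathcal{T}}$, which is exactly your induction variant and equivalent to your unroll-then-triangle-inequality route.
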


Theorem~\ref{thm:gm-mtt} indicates that if gradient matching is enforced at each step (i.e., the gradients from real and synthetic datasets are aligned), then the resulting model parameter trajectories remain closely aligned. Therefore, the gradient matching objective can be viewed as an implicit surrogate for the trajectory matching objective.
\myblue{In addition, Theorem~\ref{thm:dm-gm} establishes the connection between distribution matching and gradient matching based on the propositions in~\cite{yu2023dataset,gao2025rethinking}.}

\begin{theorem}
\label{thm:dm-gm}
Second-order constrained distribution matching yields an upper bound on gradient matching. 
\end{theorem}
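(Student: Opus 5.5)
Our plan is to compute the class-wise gradients of the ridge objective explicitly and bound their difference by prototype (first-order) and covariance (second-order) discrepancies. Write $H=\Phi_\theta(X)$, $H'=\Phi_\theta(X')$, and restrict to class $i$. Let $H_i$ be the $n_i\times d$ block of embeddings of class $i$, and let $e_i$ be its one-hot label. Normalize the class-wise loss by class size, as is standard in class-conditional matching:
\[
\mathcal{L}_i^{\mathcal{T}}=\tfrac{1}{2n_i}\|H_i\mathbf{W}-\mathbf{1}e_i^{\top}\|^2+\lambda\|\mathbf{W}\|^2 .
\]
Differentiating gives
\[
\nabla_{\mathbf{W}}\mathcal{L}_i^{\mathcal{T}}=\Sigma_i\mathbf{W}-\mu_i e_i^{\top}+2\lambda\mathbf{W},
\]
where $\mu_i=\frac{1}{n_i}H_i^{\top}\mathbf{1}$ is the class prototype (the $i$-th row of $\mathbf{P}\Phi_\theta(X)$) and $\Sigma_i=\frac{1}{n_i}H_i^{\top}H_i$ is the class second-moment matrix. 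The synthetic counterpart $\nabla\mathcal{L}_i^{\mathcal{S}}$ has the same form with $\mu_i',\Sigma_i'$. The regularization terms cancel in the difference.

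Next we bound each class-wise term with the triangle inequality and submultiplicativity:
\[
\|\nabla\mathcal{L}_i^{\mathcal{S}}-\nabla\mathcal{L}_i^{\mathcal{T}}\|\le \|\Sigma_i'-\Sigma_i\|\,\|\mathbf{W}\|+\|\mu_i'-\mu_i\| .
\]
Summing over $i$ and taking $\mathbb{E}_{\theta\sim\Phi}$ turns the right-hand side into two parts. The first is the DM objective of Eq.~(\ref{eq:dm_loss}), which controls $\sum_i\|\mu_i-\mu_i'\|$; we use Cauchy--Schwarz, $\sum_i\|\cdot\|\le\sqrt{C}\,(\sum_i\|\cdot\|^2)^{1/2}$, to pass to squared norms if $\mathcal{D}$ is squared Euclidean. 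The second is a second-order term $\sum_i\|\Sigma_i-\Sigma_i'\|$ weighted by $\|\mathbf{W}\|$. So $\mathcal{L}_{GM}\le c_1\mathcal{L}_{DM}+c_2\,\mathcal{L}_{\Sigma}$, and "second-order constrained DM" is exactly the prototype loss plus this covariance-matching constraint. Alternatively, it is the prototype loss alone when the constraint $\Sigma_i'=\Sigma_i$ is imposed, which then gives a pure DM upper bound.

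The main obstacle is making the bound uniform over the training trajectory, since $\|\mathbf{W}\|$ appears as a multiplier. We handle this by noting that, for ridge regression with $\lambda>0$ and a step size small enough, gradient-descent iterates starting from $\mathbf{W}_0$ remain in a bounded set. Alternatively, one can restrict $\mathbf{W}$ to a ball of radius $R$, as is implicit in sampling relay models from $\Phi$. This fixes $c_2=R$. A secondary subtlety is the label normalization: if the paper's losses are not normalized per class, the constants pick up factors $n_i$ and $n_i'$. We absorb these by the same class-wise reweighting that $\mathbf{P}$ and $\mathbf{P}'$ encode, following the propositions in~\cite{yu2023dataset,gao2025rethinking}.
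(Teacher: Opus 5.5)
Your proposal is essentially the paper's proof: you expand the class-normalized ridge gradients, split their difference into the second-moment discrepancy times $\mathbf{W}$ plus the prototype discrepancy $(\mu_i-\mu_i')e_i^{\top}$, and recognize $\sum_i\|\mu_i-\mu_i'\|^2=\|\mathbf{P}\Phi_\theta(X)-\mathbf{P}'\Phi_\theta(X')\|^2$ (leaving $\lambda\|\mathbf{W}\|^2$ unnormalized so that it cancels is cleaner than the paper, which simply drops the regularizer). The one fix is in the last step: for squared $\mathcal{D}$, as in the paper, the linear bound $\mathcal{L}_{GM}\le c_1\mathcal{L}_{DM}+c_2\mathcal{L}_{\Sigma}$ comes from squaring your per-class inequality with $(a+b)^2\le 2a^2+2b^2$, which gives $c_1=2$ and $c_2=2\|\mathbf{W}\|^2$ (a factor of $2$ the paper's own display also omits), whereas the Cauchy--Schwarz over classes that you quote yields $\sqrt{C}\sqrt{\mathcal{L}_{DM}}$, the appropriate form only for unsquared $\mathcal{D}$.
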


\begin{remark}
Based on Theorem~\ref{thm:gm-mtt}-\ref{thm:dm-gm}, we observe that various condensation objectives can be unified under the framework of class-level feature alignment between the original and condensed datasets, namely, distribution matching.
\end{remark}

However, this class-to-class paradigm shown in Eq.~(\ref{eq:dm_loss}) yields a coarse optimization target at the sample level. To address this, we extend the DM objective by introducing a class-to-sample matching scheme, enabling more fine-grained and interpretable optimization that better preserves sample-level information during condensation.


\subsection{A Unified Class-wise Partition Objective}
\label{sec:4.2}


For distribution matching objective formulated in Eq.~(\ref{eq:dm_loss}), to facilitate a class-to-sample matching
paradigm, we refine the aggregation matrix $\mathbf{P}$ from the following two aspects:

\begin{itemize}[leftmargin=*]
  \item The number of aggregated embeddings increases from $C$ to $N'$, ensuring that each condensed sample corresponds to the aggregation of a specific subset of original samples;
  \item Aggregations are performed within each class to preserve label semantics at a finer granularity.
\end{itemize}
Subsequently, we show that the above class-to-sample scheme can be formulated as the following class partition problem.
\begin{definition}[Class Partition Problem]
  The class partition divides $n_{i}$ original sample embeddings in class $i$ into $n^{\prime}_i$ disjoint subsets $\{ S_1^i, ..., S_{n^{\prime}_i}^i \}$, each represented by a centroid aggregated by its constituent embeddings. Let $\pi^i : \{1, ..., n_i\} \rightarrow \{1, ..., n^{\prime}_{i}\}$ denote the assignment function that maps each sample to its corresponding subset.
  The class-wise aggregation matrix $\mathbf{R}^i \in \mathbb{R}^{n^{\prime}_{i} \times n_{i}}$ is defined as:
\begin{equation}
\mathbf{R}^i_{j,k} =
\begin{cases}
\frac{1}{|S_j^i|}, & \text{if } \pi^i(k) = j,\\[4pt]
0, & \text{otherwise}
\end{cases},
\end{equation}
\end{definition}
where $\mathbf{R}^i_{j,k}$ denotes the aggregation weight for sample $k$ in class $i$, and $|S_j^i|$ is the size of subset $S_j^i$. 
Based on the above defined partition problem, the aggregation matrix $\mathbf{P} \in \mathbb{R}^{C \times N}$ is updated to $\hat{\mathbf{P}} \in \mathbb{R}^{N' \times N}$ by stacking all class-wise aggregation matrices along the diagonal, i.e., $\hat{\mathbf{P}} = \text{diag}(\mathbf{R}^1, ..., \mathbf{R}^C)$. Assuming that condensed samples are ordered in ascending order based on class labels, $\mathbf{P}'$ degrades to the identity matrix $\mathbf{I}$, and the DM objective in Eq.~(\ref{eq:dm_loss}) can be reformulated as:
\begin{equation}
\label{eq:update_dm_loss}
\operatorname*{arg\,min}_{X^{\prime},\hat{\mathbf{P}}}
\, \mathbb{E}_{\theta \sim \Phi} \left[ \left\| \hat{\mathbf{P}} \Phi_{\theta}(X) - \Phi_{\theta}(X') \right\|^2 \right],
\end{equation}
\myblue{Directly solving the above class partition problem with a nonlinear encoder is intractable in practice, as it requires enumerating a high-dimensional parameter space and computing pairwise distances across all samples, rendering the problem NP-hard~\cite{mirzasoleiman2020coresets}. To make the formulation both analytically and practically feasible, we linearize the feature encoder. This approximation is justified below.
(1) For tabular data, features typically correspond to semantically well-defined attributes~\cite{khurana2021semantic} (e.g., age, education, occupation in the Adult dataset~\cite{kohavi1996scaling}), and a linear encoder preserves interpretable statistical relationships among such attributes~\cite{hastie2009elements,bishop2006pattern}, whereas nonlinear encoders may distort feature semantics during optimization. 
(2) Moreover, with the linear encoder, our condensation procedure is not tied to any specific nonlinear deep architecture and does not rely on inductive biases introduced by deep encoders. This design choice distinguishes our method from PM and DM that depend heavily on nonlinear models, and it contributes to stronger downstream architecture transferability.}
\revise{This 
linear surrogate model 
offers a transparent and semantics-preserving mapping, formulated as $\Phi_{\Theta}(X)=X\Theta$.
Then, Eq.~(\ref{eq:update_dm_loss}) can be written as:
\begin{equation}
\label{eq:linear_dm_loss}
\operatorname*{arg\,min}_{X^{\prime},\hat{\mathbf{P}}}
\, \mathbb{E}_{\Theta \sim \Phi} \left[ \left\| \hat{\mathbf{P}} X \Theta - X'\Theta \right\|^2 \right].
\end{equation}
which is upper bounded by
\begin{equation}
\label{eq:linear_dm_loss_ub}
\mathbb{E}_{\Theta \sim \Phi} \left[ \left\| \hat{\mathbf{P}} X \Theta - X'\Theta \right\|^2 \right] \leq \mathbb{E}_{\Theta \sim \Phi} \left[ \left\| \hat{\mathbf{P}} X - X' \right\|^2 \left\| \Theta \right\|^{2} \right].
\end{equation}
Given that $\Theta$ is independent of $\hat{\mathbf{P}},X^{\prime}$, we can minimize the upper bound by optimizing
\begin{equation}
\arg\min_{X^{\prime}, \hat{\mathbf{P}}} \|\hat{\mathbf{P}}  X -X'\|^2.
\label{eq:input-objective}
\end{equation}
This objective indicates that the condensed table $X^{\prime}$ can be obtained by performing a class partition on the original tabular data $X$, eliminating the gradient-based distribution matching optimization. This model-agnostic objective can be efficiently solved by applying
any Expectation-Maximization (EM) based clustering algorithms (e.g., K-means) to each class, iteratively updating the cluster centroid $X^{\prime}$
and the aggregation matrix $\hat{\mathbf{P}}$ until convergence. 
}

\subsection{Class-adaptive Cluster Allocation Problem}
\label{sec:4.3}
As discussed in Section~\ref{intro}, pre-defined label distribution strategies for synthetic data may overemphasize either minority or majority classes, reducing the utility of the condensed data. To address this, we enhance our training-free partition-based condensation objective in Eq.~(\ref{eq:input-objective}) by incorporating adaptive per-class sample allocation, which jointly optimizes the number of condensed samples per class and their intra-class clustering quality.
This leads to the following \textit{Class-adaptive Cluster Allocation Problem (CCAP)}.
\begin{definition}[Class-adaptive Cluster Allocation Problem]
Given the original dataset $\mathcal{T}=(X,Y)$, the per-class sample sizes $\{n_{i}\}$, the class set $\mathcal{C}$, the number of total clusters $N^{\prime}$, a clustering loss function $\mathcal{L}(\cdot)$, and a reweighting exponent $\gamma$, the objective of class-adaptive cluster allocation problem is to  
determine per-class cluster count $n_{i}^{\prime}$ and the corresponding optimal partition subsets $\{S^{i}_{j}\}^{*}$ for each class $i$, such that the following 
objective is minimized: 
\begin{equation}
\small
\label{objective}
\begin{aligned}
    \min_{\{n_{i}^{\prime}\}} \quad & 
    \sum_{i=0}^{C-1} \frac{1}{n_{i}^{\gamma}} \mathcal{L}(\{S_{j}^{i}\}^{*}) \\[3pt]
    \text{s.t.} \quad 
    & \{S_{j}^{i}\}^{*} = \argmin_{\{S_{j}^{i}\}} \mathcal{L}(\{S_{j}^{i}\}), \; \; \forall i \in \mathcal{C}, \\[3pt]
    & 1 \leq n_i^{\prime} \leq \min\{ N^{\prime}-(C-1),\, n_{i} \}, \; \; \forall i \in \mathcal{C}, \\[3pt]
    & \sum_{i=0}^{C-1} n_i^{\prime} = N^{\prime}.
\end{aligned}
\end{equation}

where ${n_{i}^{\prime}}$ denotes the number of clusters assigned to each class,
$\mathcal{L}(\{S_{j}^{i}\})$ denotes the clustering loss given the partition subsets $\{S_{j}^{i}\}$ with $n_{i}^{\prime}$ clusters. 
\end{definition}
The rationale behind Eq.~(\ref{objective}) is as follows. It adopts a bi-level structure, where the upper-level determines the allocation of class-wise cluster number $\{n_{i}^{\prime}\}$, and the lower level solves clustering subproblems within each class to minimize the corresponding partition loss. As the inner-level problem can be efficiently solved using standard clustering algorithms (e.g., K-means) given a fixed number of clusters, the overall problem essentially reduces to combinatorial optimization over discrete integer variables $\{n_{i}^{\prime}\}$, aiming to achieve minimal total clustering loss across all classes.
However, as class imbalance is prevalent in tabular data, majority classes tend to dominate the sample size and thus contribute disproportionately to the clustering loss.
Consequently, the optimization is biased toward improving the clustering quality of majority classes, which leads to allocating more clusters to them and further amplifies the imbalance in the condensed data. To mitigate this issue,
we introduce a class-wise scale factor. Specifically, the optimal clustering loss is reweighted by $\frac{1}{n_{i}^{\gamma}}$, where $\gamma \in [0,1]$ is an adaptive weighting exponent. When $\gamma=1$, all classes are assigned equal weights, ensuring maximum balance, whereas 
$\gamma = 0$ yields proportional weighting, which places greater emphasis on majority classes. To ensure feasibility and class coverage during condensation, we impose two integer constraints: (1) the total number of class-specific clusters must equal the cluster budget $N^{\prime}$; and (2) each class is assigned at least one cluster, but no more than its number of samples in the original dataset, thereby preventing class dropout and over-clustering.
In this work, we adopt the K-means algorithm for clustering, which minimizes the within-cluster sum of squares (WCSS)~\cite{lloyd1982least} as the clustering loss. 

\begin{theorem}
\label{thm:pgsa-nphard}
\revise{The class-adaptive cluster allocation problem (CCAP) is NP-hard.}
\end{theorem}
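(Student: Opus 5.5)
The plan is to reduce from Euclidean $k$-means, i.e., minimum-sum-of-squares clustering. Given a point set in $\mathbb{R}^F$ and an integer $k$, deciding whether some partition into $k$ clusters has WCSS at most a threshold $\tau$ is NP-hard. This is known for general dimension with $k=2$ and for the plane with general $k$. CCAP, with WCSS as the clustering loss $\mathcal{L}$, should contain this problem as a special case. The argument therefore needs only two ingredients: CCAP's inner level is exactly a $k$-means instance, and the outer allocation can be rigged so that it imposes no extra freedom.

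\textbf{Construction.} Given a $k$-means instance $(P,k,\tau)$ with $|P|=m\ge k$, build a CCAP instance with $C=2$ classes.

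Class $0$ consists of the points of $P$, so $n_0=m$.

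Class $1$ consists of a single sample placed anywhere, so $n_1=1$.

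Set $N'=k+1$ and choose any $\gamma\in[0,1]$.

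The constraint $1\le n_1'\le \min\{N'-1,n_1\}=1$ forces $n_1'=1$. Then $\sum_i n_i'=N'$ forces $n_0'=k$, and this choice satisfies $1\le k\le\min\{k,m\}$. The feasible allocation is therefore unique.

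Class $1$ has a single point in a single cluster, so its WCSS is $0$. The CCAP optimum thus equals $m^{-\gamma}\cdot \mathrm{OPT}_{k\text{-means}}(P)$. Computing it, or deciding whether it is at most $m^{-\gamma}\tau$, decides the $k$-means instance. The reduction is clearly polynomial: it copies the points and adds one.

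If one wants to avoid the degenerate single-point class, an alternative is to give class $1$ exactly one point and $n_1'\le 1$ as above. Another is to use $C$ copies of the instance, but the single-point gadget suffices and is cleanest. The full problem, which optimizes over allocations $\{n_i'\}$ and the inner partitions, is at least as hard as its restriction to this family of instances, so CCAP is NP-hard. Its decision version is also in NP, since an allocation together with partitions certifies the value.

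\textbf{Main obstacle.} The delicate step is being precise about what the decision or optimization version of CCAP is. As written, the inner problem is an $\argmin$, so CCAP implicitly requires the optimal WCSS partitions. I would formalize CCAP as: given the data and $N'$, find $\{n_i'\}$ and partitions minimizing $\sum_i n_i^{-\gamma}\,\mathrm{WCSS}$. I would then note that the inner $\argmin$ is absorbed into the joint minimization, because for fixed $\{n_i'\}$ the classes decouple. With that formalization, I would cite the NP-hardness of Euclidean MSSC (Aloise et al.\ for $k=2$ in general dimension; Mahajan et al.\ for the plane) so that the hardness statement rests on a precise reduction.
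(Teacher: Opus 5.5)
Your argument is correct, and it takes a different route from the paper. The paper reduces from the Multiple-Choice Knapsack Problem: groups become classes, candidate cluster counts $k$ become items of weight $k$, and optimal clustering losses become negated item values, so the hardness is placed in the cross-class allocation. You reduce instead from Euclidean minimum-sum-of-squares clustering. A singleton second class pins the allocation to $(k,1)$, so CCAP collapses to one $k$-means instance scaled by $m^{-\gamma}$; taking $\gamma=0$ or $\gamma=1$ keeps the threshold rational.

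Your route is the sounder one. The paper's construction takes the weights to be the cluster counts and the values to be the clustering losses of given data. That only exhibits CCAP as a special case of MCKP, which is the direction that gives no hardness. The paper never shows how an arbitrary MCKP instance, with arbitrary values and binary-encoded weights, would be realized.

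Nor could the allocation layer carry the hardness on its own. Every weight satisfies $n_i' \le N' \le N$. Once the table of optimal per-class losses is available, the outer problem is solved exactly by an $O(C (N')^2)$ dynamic program over (class index, clusters used so far). So the intractability of CCAP genuinely comes from the exact inner clustering, which is exactly what your reduction isolates. The price is that your proof says nothing about the allocation being hard in its own right, but by the dynamic-programming observation it is not.

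One small point: your sentence about ``avoiding the degenerate single-point class'' just restates your gadget and can be dropped. The singleton class is a legitimate instance.
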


To demonstrate the difficulty of the formulated CCAP, we analyze the hardness of the problem in Theorem~\ref{thm:pgsa-nphard}.
We prove the NP-hardness of CCAP by a reduction from the Multiple-Choice Knapsack Problem (MCKP)~\cite{Kellerer2004Knapsack}.

\section{The Proposed Approach}
\label{sec:approach}
\revise{
In this section,
we propose the $\mathrm{C^{2}TC}$ framework, which consists of two main components, \red{as illustrated in Figure~\ref{fig:pipeline}}.
We first introduce a hybrid categorical feature encoding (HCFE) strategy that transforms heterogeneous features into a unified numerical space while preserving their semantic information (Section~\ref{cate_enc}). 
We then propose a Heuristic First-Improvement Local Search (HFILS) algorithm developed to efficiently solve the CCAP problem (Section~\ref{subsec:first-improvement}). Guided by the elbow heuristic, HFILS iteratively alternates between soft allocation and class-wise clustering, where the former adaptively generates feasible cluster reallocations and the latter evaluates their clustering quality.
By updating the allocation whenever the objective improves, HFILS achieves efficient optimization while ensuring high-quality condensation.
}

\label{framework}
\begin{figure} [t]
    \centering
    \includegraphics[width=\linewidth]{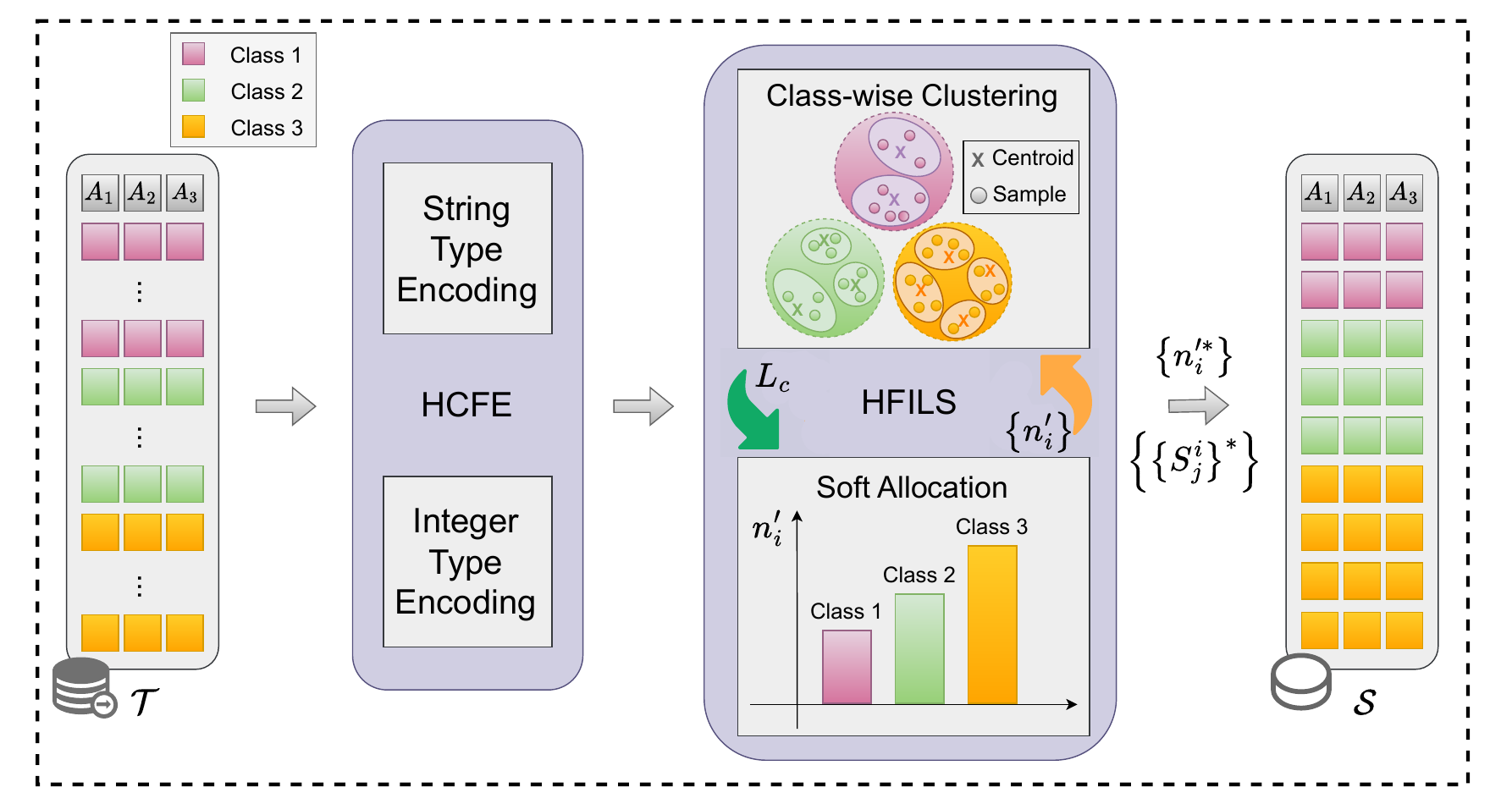}
     \caption{Overall framework of $\mathrm{C^{2}TC}$.}
    \label{fig:pipeline}
\end{figure}




\subsection{Hybrid Categorical Feature Encoding}
\label{cate_enc}
\revise{
Categorical features in tabular data are typically represented as strings or integers, yet they lack inherent ordering and a well-defined distance metric, making it nontrivial to incorporate their semantics into distance-based clustering. The difficulty is further compounded by the heterogeneity of categorical features, which requires representations that remain semantically faithful while being numerically comparable across dimensions. Conventional categorical encoding approaches, such as one-hot encoding~\cite{borisov2022deep}, expand these features into high-dimensional sparse vectors, which exacerbate the curse of dimensionality and lead to unstable clustering, increased computational overhead, and reduced interpretability~\cite{high-d}. To address these intrinsic limitations, we develop HCFE, a hybrid encoding strategy that unifies heterogeneous categorical representations into a low-dimensional numerical space, 
effectively preserving semantic consistency while ensuring compatibility with clustering-based distance computations.
}

\myparagraph{String-type categorical feature encoding}
\myblue{String-type categorical features (e.g., ``NSW'', ``Doctor'', ``Male'') often exhibit rich lexical structures that traditional one-hot encoding fails to capture. One-hot encoding treats all categories as orthogonal and equidistant, thereby discarding the intrinsic relationships between string values (e.g., "Engineer" vs. "Senior Engineer").
To address this, we employ similarity encoding~\cite{cerda2018similarity} based on n-gram overlaps~\cite{li2007vgram}. This method explicitly maps discrete tokens into a continuous space where distances reflect lexical and semantic relatedness, making it robust to variations in string representations.}
Formally, let \( s \in \mathbb{S}^n \) denote a categorical column with \( k \) unique string values 
\( \{ d_1, d_2, \dots, d_k \} \). 
Given a symmetric similarity function 
\( \text{sim}: \mathbb{S} \times \mathbb{S} \to [0, 1] \), 
where \( \text{sim}(d_i, d_j) \) measures the n-gram overlap~\cite{li2007vgram} between two strings, 
we map each value $d_i$ to a $k$-dimensional 
vector:
\begin{equation}
\mathbf{v}_i = \left[\text{sim}(d_i, d_1), \, \text{sim}(d_i, d_2), \, \dots, \, \text{sim}(d_i, d_k) \right].
\end{equation}
\myblue{While similarity encoding captures semantic information, determining the full similarity matrix for high-cardinality features results in a high-dimensional and redundant representation. To address this, we concatenate the similarity vectors from all string-type categorical columns into a single embedding vector 
$\mathbf{x}_{\text{str}} \in \mathbb{R}^{D_{\text{str}}}$,
where $D_{\text{str}} = \sum_{j=1}^{|S_{\text{str}}|} K_j$.
Here, $S_{\text{str}}$ denotes the set of string-type categorical features, and $K_j$ is the cardinality of the $j$-th feature in $S_{\text{str}}$.
This high-dimensional input is then mapped by the encoder
$E_{\mathcal{W}_E}(\cdot)$ to a compact latent representation
$\mathbf{z}_{\text{str}} \in \mathbb{R}^{|F_{\text{str}}|}$.
Here, we set the latent dimension equal to the number of original string-type
categorical features to preserve the feature count while removing redundancy.
Subsequently, the decoder $D_{\mathcal{W}_D}(\cdot)$ reconstructs the original
embedding from $\mathbf{z}_{\text{str}}$.
The model is optimized by minimizing the reconstruction loss:
\begin{equation}
\mathcal{L}_{\text{AE}} =
\frac{1}{N} \sum_{i=1}^{N}
\left\|
\mathbf{x}_{\text{str}, i}
- D_{\mathcal{W}_D}\!\left(
E_{\mathcal{W}_E}(\mathbf{x}_{\text{str}, i})
\right)
\right\|_2^2,
\end{equation}
where $N$ denotes the number of training samples.
This objective ensures that the latent representation preserves the semantic
structure captured by the similarity encoding.
The autoencoder is trained by backpropagating the reconstruction loss
$\mathcal{L}_{\text{AE}}$ and updating the parameters of both
$E_{\mathcal{W}_E}$ and $D_{\mathcal{W}_D}$ via gradient descent.
After training, the final encoder $E_{\mathcal{W}_E}$ is used for feature mapping.}
The resulting embeddings are then rescaled to the $[0,1]$ via Min–Max normalization~\cite{al2006data}, yielding semantically consistent and numerically compatible representations for downstream clustering.

\myparagraph{Integer-type categorical feature encoding}
\myblue{In many tabular benchmarks, categorical attributes are provided as integer codes, where the original semantic meanings are often inaccessible. 
For binary attributes, we retain the $\{0,1\}$ label encoding~\cite{borisov2022deep}, which naturally lies within the $[0,1]$ range and can be directly utilized in distance computations for clustering. 
However, for multi-valued attributes, 
treating them as numerical values introduces spurious ordinal relationships (e.g., implying category "5" is greater than "1"), which distorts distance metrics in clustering. Additionally, one-hot encoding is often impractical for high-cardinality features due to the curse of dimensionality. To address this, we adopt a smoothed variant of target encoding~\cite{micci2001preprocessing,borisov2022deep}, which substitutes the implicit integer codes with the mean of the target variable. This effectively injects supervised statistical information into the feature representation, converting discrete, meaningless integer codes into a continuous space that reflects the category's predictive signal.}
Formally, for a sample \( x_i \) with label \( y_i \), let \( x_i^{(j)} \) denote the \( j \)-th categorical feature with \( K_j \) distinct values. 
The smoothed target encoding is formulated as:
\begin{equation}
\small
\begin{split}
\tilde{x}_{i}^{(j)} & = \frac{m_{i} \cdot \mu(\{y_{i}\}) + \lambda \cdot \mu(\{y\})}{m_{i} + \lambda} \\
\mu(\{y_{i}\}) = & \frac{1}{m_{i}} \sum_{k:x_{k}^{(j)} = x_{i}^{(j)}} y_{k}, \quad \mu(\{y\}) = \frac{1}{N}\sum_{k=1}^{N} y_{k},
\end{split}
\end{equation}
where \( m_i \) is the number of samples sharing the same value as \( x_i^{(j)} \), while $\mu(\{y_{i}\})$ and $\mu(\{y\})$ represent the category-specific and global label means, respectively.
The parameter 
$\lambda$
regulates the balance between local statistical fidelity and global stability during smoothing. 
To enhance generalization and mitigate overfitting, Gaussian noise is injected:
\begin{equation}
\tilde{x}_{i}^{(j)} \leftarrow \tilde{x}_{i}^{(j)} + \epsilon, \quad \epsilon \sim \mathcal{N}(0,\sigma^2).
\end{equation}
Finally, all encoded features are normalized to the $[0,1]$ range to ensure numerical comparability across heterogeneous attributes.
Overall, this strategy effectively captures statistical dependencies between categorical features and the target variable, yielding stable and informative numerical representations for distance-based clustering.

After the hybrid preprocessing, all heterogeneous features are represented in a unified numerical space, laying a foundation for effective and semantics-consistent data condensation.
\myblue{
The proposed condensation framework is designed to be encoder-agnostic. 
While HCFE is adopted as the default strategy, the framework can accommodate alternative numerical representations as demonstrated in 
\red{Appendix~\ref{ab_app}~\cite{appendix}}.
}

\subsection{Heuristic First-improvement Local Search}
\label{subsec:first-improvement}

\noindent\textbf{Motivation.} 
\revise{
Theorem~\ref{thm:pgsa-nphard} shows that CCAP is a generalized multiple-choice knapsack problem (MCKP), which is NP-hard.
Although MCKP can typically be solved by exact algorithms such as dynamic programming and branch-and-bound~\cite{Kellerer2004Knapsack}, CCAP is more challenging because the deterministic item values in MCKP correspond to clustering losses in CCAP that must be repeatedly computed through K-means across classes and cluster numbers, making the process computationally prohibitive.
Moreover, class allocations are interdependent, leading to an exponentially large combinatorial search space. Consequently, finding the globally optimal allocation is extremely challenging in both theory and practice.
To tackle this challenge, we draw inspiration from the \textit{elbow method} widely used in K-means, where clustering loss (i.e., WCSS) typically decreases as the number of clusters increases, but with diminishing marginal gains. This observation suggests that prioritizing additional clusters for minority classes
, which typically yield higher marginal gains in clustering loss reduction, can effectively enhance overall clustering quality.
Based on this intuition, we develop HFILS, which iteratively refines cluster allocations guided by the elbow-shaped loss trend and efficiently converges to a high-quality feasible solution.
}

\begin{algorithm}[t]
\SetVline
\footnotesize
\caption{{\textsc{HFILS}}}
\label{alg:ifins}
\Input{Original dataset $\mathcal{T}=(X,Y)$, number of classes $C$, condensation ratio $r$, max iterations $T$, adaptive exponent $\gamma$, step decay factor $l\!\in\!(0,1)$, tolerance $\epsilon$, patience $p$}
\Output{Final cluster allocation $\{n_i^{\prime *}\}$ and partitions 
$\{\{S^{i}_{j}\}^{*}\}$}
\BlankLine
\State{$n_i \gets |\,\{x \in X \mid y(x)=i\}\,|$ \textbf{for each} $i \in \{0,1,\dots,C-1\}$}

\For{$i \in \{0, 1, \ldots, C-1\}$}{
    \State{$n_i^{\prime} \gets \lfloor n_i \cdot r \rfloor$}
}
\State{$N^{\prime} \gets \sum_{i=0}^{C-1} n_i^{\prime}$}

\Cmt{Initialize loss cache matrix}
\State{$\mathbf{M} \in \mathbb{R}^{C \times N^{\prime}} \gets \infty$}

\State{$\mathcal{L}_c \gets \text{ClassWiseClustering}(\mathcal{T}, \{n_i^{\prime}\}, C, N^{\prime}, \gamma, \mathbf{M})$}

\State{$\mathcal{L}_c^{*} \gets \mathcal{L}_c$, $\{n_i^{\prime *}\} \gets \{n_i^{\prime}\}$}
\Cmt{Initialize allocation step}
\State{$s_{\max} \gets \max\!\bigl(\lfloor \mathrm{std}(\{n_i^{\prime}\}) \rfloor, 1\bigr)$} 
\Cmt{Early-stopping clock}
\State{$u \gets 0$} 

\For{$t \in \{0, 1, \ldots, T-1\}$}{
        \State{$\{n_i^{\prime}\} \gets \text{SoftAllocate}(\{n_i\}, \{n_i^{\prime}\}, C, N^{\prime}, s_{\max})$}
        \State{$\mathcal{L}_c \gets \text{ClassWiseClustering}(\mathcal{T}, \{n_i^{\prime}\}, C, N^{\prime}, \gamma, \mathbf{M})$}

        \If{$\mathcal{L}_c < \mathcal{L}_c^{*}$}{
            \State{$\mathcal{L}_c^{*} \gets \mathcal{L}_c$; $\{n_i^{\prime *}\} \gets \{n_i^{\prime}\}$}
            \Cmt{Step decay}
            \State{$s_{\max} \gets \max(\lfloor s_{\max} \cdot l \rfloor, 1)$} 
        }

        \If{$\mathcal{L}_c < \mathcal{L}_c^{*} - \epsilon$}{
            \State{$u \gets 0$}
        }
        \Else{
            \State{$u \gets u + 1$}
            \If{$u \ge p$}{
                \State{\textbf{break}}
            }
          
        }
    }
\State{$\{\{S^{i}_{j}\}^{*}\} \leftarrow$ RunKMeans($X_i$, $n_i'^*$) for each class $i$}

\Return{$\{n_{i}^{\prime *}\}, \{\{S^{i}_{j}\}^{*}\}$}
\end{algorithm}

\myparagraph{HFILS algorithm} 
HFILS serves as the core solver of our framework, iteratively searching and updating the candidate solution.
At each iteration, HFILS employs a soft allocation strategy (Algorithm~\ref{alg:softallocate}) guided by the elbow method to heuristically adjust per-class cluster numbers and achieve adaptive balance across classes through soft constraints. It then performs class-wise clustering (CW-Clus, Algorithm~\ref{alg:ccap}) to evaluate the clustering loss and update the allocation accordingly.

\revise{
The overall procedure is summarized in Algorithm~\ref{alg:ifins}.
We first adopt a ratio-preserving strategy (lines~2-3) to obtain an initial feasible allocation by proportionally distributing clusters across classes according to their sample sizes.
To accelerate later iterations, we introduce a \textit{loss cache matrix} $\mathbf{M}$ that stores clustering losses generated by CW-Clus, thus eliminating redundant computations (line~5).
The initial total loss is then evaluated through CW-Clus based on the current allocation (line~6).
To enable adaptive adjustment, we introduce a \textit{soft step size}, where the maximum step $s_{\max}$ is determined by the standard deviation (std) of the class distribution to reflect class imbalance (line~8). A real step size is subsequently sampled within this bound (line~5 in Algorithm~\ref{alg:softallocate}) to allow flexible and data-dependent reallocation.
The algorithm then proceeds iteratively, alternating between soft allocation and clustering evaluation (lines~11–12). Upon each iteration, the allocation is updated only when a first local improvement is detected (lines~13–14). 
\myblue{To facilitate efficient optimization, we adopt a \textit{step-decay mechanism} that gradually reduces the maximum step size \(s_{\max}\) via a scaling factor \(l \in (0,1)\) (line~15). This enables a coarse-to-fine exploration strategy: a larger \(s_{\max}\) early on allows broader exploration of class combinations, while progressively reducing \(s_{\max}\) restricts updates to fine-grained adjustments. This strategy stabilizes the condensation process and accelerates convergence.}
Finally, an \textit{early-stopping strategy} (lines~16-21) monitors convergence by tracking the relative improvement in the overall loss. The search terminates once this improvement falls below the threshold $\epsilon$ for $p$ consecutive iterations.
When either the maximum iteration $T$ is reached or the early-stopping criterion is satisfied, the best cluster allocation $
\{n_{i}^{\prime *}\}$ is obtained, followed by K-means clustering on each class to produce the corresponding intra-class partitions $\{\{S^{i}_{j}\}^{*}\}$ (line~22). The final allocation and partitions are returned (line~23).
}

\myparagraphexp{Class-wise clustering algorithm}
Algorithm~\ref{alg:ccap} illustrates the CW-Clus procedure.
For each class, the clustering loss is either computed via K-means (lines 3-8) or retrieved from the cached loss matrix $\mathbf{M}$ when the corresponding entry already exists (i.e., the condition in line 4 is not met).
The total clustering loss $\mathcal{L}_c$ is then accumulated over all classes (line 9) and returned as the overall objective value (line 10).

\begin{algorithm}[t]
{
\SetVline
\footnotesize
\caption{ClassWiseClustering}
\label{alg:ccap}
\Input{Original dataset $\mathcal{T}=(X,Y)$, number of classes $C$,  current cluster allocation $\{n_i^{\prime}\}$, total cluster count $N^{\prime}$, adaptive exponent $\gamma$, loss matrix $\mathbf{M}$}

\Output{Total clustering loss $\mathcal{L}_c$}

\vspace{3pt}
\State{$X_i \gets \{x \in X \mid y(x)=i\}$ \textbf{for each} $i \in \{0,1,\dots,C-1\}$}

\State{$\mathcal{L}_c \gets 0$}

\BlankLine
\For{$i \in \{0, 1, \ldots, C-1\}$}{
    \If{$\mathbf{M}[i, n_i^{\prime}] = \infty$}{
        \Cmt{Compute class-wise clustering loss}
        \State{$\textnormal{SSD},\{S^{i}_{j}\}^{*} \gets \text{RunKMeans}(X_i, n_i^{\prime})$}
        \State{$n_i \gets |X_i|$}
        \Cmt{Compute class-wise scale factor}
        \State{$w \gets \dfrac{1}{n_i^{\gamma}}$}
    
        \State{$\mathbf{M}[i, n_i^{\prime}] \gets \textnormal{SSD} \times w$}
    }
    \State{$\mathcal{L}_c \gets \mathcal{L}_c + \mathbf{M}[i, n_i^{\prime}]$}
}
\Return{$\mathcal{L}_c$}
}
\end{algorithm}

\myparagraphexp{Soft allocate algorithm}
To facilitate dynamic local search and better approximate the optimum of in Eq.~(\ref{objective}), we design a neighborhood scanning strategy based on composite reallocation moves.
In each move, the algorithm decreases the cluster count of one source class while simultaneously increasing those of multiple target classes, thereby realizing adaptive redistribution across classes. Instead of a rigid one-to-one adjustment where a single class gains clusters strictly at the expense of another, our approach employs a \textit{softer reallocation mechanism} that enables smoother, more flexible, and better-balanced redistribution across multiple classes.

\revise{
The soft allocation procedure is illustrated in Algorithm~\ref{alg:softallocate}. It first selects a source class $c_{\mathrm{src}}$ with more than one cluster (line~1) and constructs the target class set $\mathcal{C}_{\mathrm{tgt}}$, which contains classes that can still receive additional clusters within their upper limits (lines~2-4). A real step size $s$ is then randomly sampled from $[1, s_{\max}]$ and bounded by the transferable capacity of $c_{\mathrm{src}}$ (lines~5-6), introducing controlled stochasticity that promotes diverse yet feasible adjustments.
After subtracting $s$ clusters from the source class (line~7), the reallocation proceeds according to the remaining capacities of target classes. For each class in $\mathcal{C}_{\mathrm{tgt}}$, the algorithm estimates its residual capacity $r_c$ (lines~8-9) and determines a provisional allocation $a_c$ proportional to its relative ratio $\frac{r_c}{r_{\text{tot}}}$ (lines~11--12), guided by the elbow heuristic that prioritizes classes expected to yield higher clustering gains. To handle rounding effects, the remaining clusters $\delta$ are sequentially assigned to classes with larger residual capacities until all clusters are fully allocated (lines~13-19). Finally, the per-class cluster counts $n_c'$ are updated (lines~20--21) to reflect the completed redistribution, resulting in an updated allocation $\{n_i^{\prime}\}$ (line~22).
}

\begin{algorithm}[t]
\SetVline
\footnotesize
\caption{\text{SoftAllocate}}
\label{alg:softallocate}
\Input{Per-class sample sizes $\{n_i\}$, current cluster allocation $\{n_i^{\prime}\}$, number of classes $C$, total cluster count $N^{\prime}$, current max step size $s_{\max}$}
\Output{Updated per-class cluster sizes $\{n_i^{\prime}\}$}
\BlankLine

$c_{\mathrm{src}} \sim \text{Uniform}\!\left(\{\, i \mid n_i^{\prime} > 1 \,\}\right)$\;
\BlankLine

\For{$i \in \{0, 1, \ldots, C-1\}$}{
  \State{$n_{i,\max}^{\prime} \gets \min\{\,N^{\prime}-(C-1),\, n_i\,\}$}
}

$\mathcal{C}_{\mathrm{tgt}} = \{\, i \mid n_i^{\prime} < n_{i,\max}^{\prime} \ \land\ i \neq c_{\mathrm{src}} \,\}$\;
\BlankLine
\Cmt{Randomize a real step size}
$s \sim \textnormal{Uniform}(\{1,2,\dots,s_{\max} \})$\;
\BlankLine
$s \gets \min(s,\, n_{c_{\mathrm{src}}}^{\prime} - 1)$\;
\BlankLine
$n_{c_{\mathrm{src}}}^{\prime} \gets n_{c_{\mathrm{src}}}^{\prime} - s$\;
\Cmt{Allocate by remaining capacity ratio
}
\ForEach{$c \in \mathcal{C}_{\mathrm{tgt}}$}{ \State{$r_c \gets n_{c,\max}^{\prime} - n_c^{\prime}$}}
$r_{\text{tot}} \gets \sum_{c \in \mathcal{C}_{\mathrm{tgt}}} r_c$\;

\BlankLine
\ForEach{$c \in \mathcal{C}_{\mathrm{tgt}}$}{
  \State{$a_c \gets \left\lfloor s \cdot \dfrac{r_c}{\,r_{\text{tot}}\,} \right\rfloor$}
}

\BlankLine
\Cmt{Distribute unassigned clusters}
$\delta \gets s - \sum_{c \in \mathcal{C}_{\mathrm{tgt}}} a_c$\;
\textnormal{sort} $\mathcal{C}_{\mathrm{tgt}}$ by descending $r_c$\;
\ForEach{$c \in \mathcal{C}_{\mathrm{tgt}}$}{
  \If{$\delta = 0$}{\State{\textbf{break}}}
  \If{$n_c^{\prime} + a_c + 1 \le n_{c,\max}^{\prime}$}{
    $a_c \gets a_c + 1$;\quad \State{$\delta \gets \delta - 1$}
  }
}

\ForEach{$c \in \mathcal{C}_{\mathrm{tgt}}$}{ \State{$n_c^{\prime} \gets n_c^{\prime} + a_c$}}
\Return $\{n_i^{\prime}\}$
\end{algorithm}

\begin{theorem}[Time Complexity Analysis]
\label{thm:complexity}
Based on Algorithm~\ref{alg:ifins}, the time complexity of HFILS is $\mathcal{O}(T \cdot \frac{NN^{\prime}IF}{C})$, where $I$ is the number of iteration of K-means algorithm.
\end{theorem}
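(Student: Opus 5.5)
The plan is to bound the cost of one call to each subroutine, multiply by the number of outer iterations, and show that initialization and the final clustering step are dominated by this product. Throughout I use the standard cost of Lloyd's K-means: clustering $n$ points in $\mathbb{R}^F$ into $k$ clusters for $I$ iterations costs $\mathcal{O}(nkIF)$, which covers the assignment step and the centroid update.

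\textbf{Cheap parts.} Lines 1--4 of Algorithm~\ref{alg:ifins} cost $\mathcal{O}(N+C)$. Initializing $\mathbf{M}$ costs $\mathcal{O}(CN')$. Each call to SoftAllocate (Algorithm~\ref{alg:softallocate}) performs only scans and a sort over the $C$ classes, so it costs $\mathcal{O}(C\log C)$. The bookkeeping in lines 13--21 is $\mathcal{O}(C)$. All of these are independent of $N$ and $F$, so they are absorbed into the dominant term.

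\textbf{Dominant part.} One call to ClassWiseClustering (Algorithm~\ref{alg:ccap}) runs K-means on each class $i$ with $n_i'$ clusters, at cost $\mathcal{O}(n_i n_i' I F)$. Summing over classes gives $\mathcal{O}\bigl(IF\sum_i n_i n_i'\bigr)$. The key step is to express $\sum_i n_i n_i'$ as $\mathcal{O}(NN'/C)$. Under the ratio-preserving allocation $n_i'\approx r n_i$ with $r=N'/N$, this sum equals $r\sum_i n_i^2$. Under the (approximately) balanced per-class sizes $n_i\approx N/C$, $n_i'\approx N'/C$, each term is about $NN'/C^2$, and summing $C$ of them gives $NN'/C$.

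I expect this step to be the main obstacle, because for arbitrary allocations one only has $\sum_i n_i n_i' \le N\max_i n_i'$, and $\max_i n_i'$ can be as large as $N'$. I would therefore state the bound under the average-case assumption of per-class sizes $\Theta(N/C)$ and allocations $\Theta(N'/C)$, which is implicitly the paper's convention. I would also note that the SoftAllocate moves keep allocations near their proportional start, since the step size is bounded by $s_{\max}$ and decays. In the worst case the bound degrades to $\mathcal{O}(T\cdot NN'IF)$.

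\textbf{Assembling the bound.} The loss cache $\mathbf{M}$ only removes repeated computations, so it can only lower the cost. The upper bound is therefore at most one full ClassWiseClustering cost per iteration. The outer loop runs at most $T$ iterations, with early stopping only reducing this. Adding the initial call and the final RunKMeans pass (line 22) contributes $\mathcal{O}(NN'IF/C)$, which is dominated. The total is $\mathcal{O}\!\left(T\cdot \frac{NN'IF}{C}\right)$.
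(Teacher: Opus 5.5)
Your proposal matches the paper's proof step for step: both charge each of the $T$ iterations one full ClassWiseClustering pass costing $\mathcal{O}(IF\sum_i n_i n_i')$, treat SoftAllocate as negligible, absorb the initial call and the final RunKMeans pass, and reach $NN'/C$ only by replacing every $n_i'$ with the average $\bar n' = N'/C$. The paper makes that replacement silently; you state it as an explicit average-case assumption, and your worst-case bound $\mathcal{O}(T\cdot NN'IF)$ is correct.

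One simplification: since $\sum_i n_i n_i' \le N'\max_i n_i$, the assumption $\max_i n_i = \mathcal{O}(N/C)$ alone gives the bound for every allocation. You can therefore drop the unproven claim that SoftAllocate keeps allocations near the proportional start. That claim would not help under imbalance anyway, because the proportional allocation itself gives $\sum_i n_i n_i' \approx r\sum_i n_i^2 \ge NN'/C$.
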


According to Theorem~\ref{thm:complexity}, the overall time complexity of the proposed HFILS algorithm scales linearly with both the size of the original dataset $N$, and the feature dimension $F$.

\section{Experiments} 
\label{sec:Experiments}

\subsection {Experimental Setup}
\label{exp_setups}

\noindent\textbf{Datasets.}
To evaluate the effectiveness and efficiency of our approach, we conduct experiments on 10 publicly available tabular datasets from OpenML benchmarks~\cite{vanschoren2014openml}. These datasets cover diverse real-world domains, including but not limited to energy forecasting, income prediction, and flight delay analysis, and encompass heterogeneous data types, such as numerical and categorical features. 
\myblue{More dataset description can be found in Appendix~\ref{dataset_app}~\cite{appendix}.}
For all datasets, we randomly split them for 80\%/10\%/10\% as train/validation/test sets.
Table~\ref{tab:datasets} provides the detailed statistics of all datasets.

\noindent\textbf{Baselines.}
We compare our method with representative baselines, including coreset selection approaches (Random, Herding~\cite{welling2009herding}, and K-Center~\cite{farahani2009facility}) and dataset condensation methods (DM~\cite{zhao2023dataset}, GM~\cite{zhao2021dataset}, and MTT~\cite{cazenavette2022dataset}).
\myblue{Detailed descriptions for the baselines can be found in Appendix~\ref{comp_method_app}~\cite{appendix}}.

\noindent\textbf{Evaluation metrics.}  
To evaluate the classification performance, we adopt two widely used metrics: \emph{Accuracy} and \emph{Macro-F1}.
\myblue{Detailed descriptions for the evaluation metrics can be found in Appendix~\ref{eval_app}~\cite{appendix}}.

\myblue{More implementation details of our experiments can be found in Appendix~\ref{implement_app}~\cite{appendix}}.
\begin{table}[t]
\caption{The statistics of datasets. Num and Cat stands for numerical and categorical features, respectively. MinMaxCF denotes the ratio of min--max class frequency~\cite{mcelfresh2023when}.}
\label{tab:datasets}
\centering
\small
\resizebox{\linewidth}{!}{\begin{tabular}{lccccc}
\toprule
\textbf{Dataset} & \textbf{Size} & \textbf{\#Num}  & \textbf{\#Cat} & \textbf{MinMaxCF} & \textbf{\#Classes} \\
\midrule
Electricity (EL) & 38,474  & 6   & 1   & 1.00 & 2 \\
Adult (AD) & 48,842  & 6   & 8   & 0.31 & 2 \\
Jannis (JA) & 83,733  & 54  & 0   & 0.04 & 4 \\
Diabetes130US (DA) & 101,766 & 13  & 36  & 0.21 & 3 \\
Road Safety (RS) & 111,762 & 29  & 3   & 1.00 & 2 \\
Epsilon (EP) & 500,000 & 2,000 & 0  & 1.00 & 2 \\
Airlines (AI) & 539,383 & 3   & 4   & 0.80 & 2 \\
Covertype (CO) & 581,012 & 10  & 44  & 0.01 & 7 \\
Higgs (HI) & 940,160 & 24 & 0  & 1.00 & 2 \\
Microsoft (MI) & 1,200,192 & 136 & 0 & 0.01 & 5 \\
\bottomrule
\end{tabular}}
\end{table}

\begin{table*}[t]
\caption{Overall Performance Comparison. \emph{Whole Dataset} denotes training on the full dataset and serves as an approximate upper bound. Bold indicates the
best performance and \underline{underline} means the runner-up.}
\vspace{-5pt}
\label{tab:class}
\centering
\small
\resizebox{1.0\textwidth}{!}
{%
\begin{tabular}{cccccccccccccccccc}
\toprule
\multirow{2}{*}{\textbf{Dataset}} & \multirow{2}{*}{r} & \multicolumn{2}{c}{\textbf{Random}} & \multicolumn{2}{c}{\textbf{Herding}}                      & \multicolumn{2}{c}{\textbf{K-Center}} & \multicolumn{2}{c}{\textbf{DM}} & \multicolumn{2}{c}{\textbf{GM}} & \multicolumn{2}{c}{\textbf{MTT}} & \multicolumn{2}{c}{\textbf{$\text{C}^{2}\text{TC}$}} & \multicolumn{2}{c}{\textbf{Whole Dataset}}                        \\ 
\cmidrule(lr){3-4}
\cmidrule(lr){5-6}
\cmidrule(lr){7-8}
\cmidrule(lr){9-10}
\cmidrule(lr){11-12}
\cmidrule(lr){13-14}
\cmidrule(lr){15-16}
\cmidrule(lr){17-18}
    &                    
    & Acc              & Macro-F1         
    & Acc                         & Macro-F1                    
    & Acc                & Macro-F1         
    & Acc            & Macro-F1       
    & Acc          & Macro-F1         
    & Acc             & Macro-F1       
    & Acc           & Macro-F1          
    & Acc                         & Macro-F1                    \\ 
\midrule
\multirow{3}{*}{EL}

  & 1\%    & 57.0\scriptsize{$\pm$2.5}
           & 56.0\scriptsize{$\pm$3.4}
           & 54.1\scriptsize{$\pm$0.1}
           & 54.1\scriptsize{$\pm$0.1}
           & 53.8\scriptsize{$\pm$1.5}
           & 48.5\scriptsize{$\pm$5.6}
           & 57.4\scriptsize{$\pm$2.0}
           & 56.4\scriptsize{$\pm$3.0}
           & 50.6\scriptsize{$\pm$1.3}
           & 37.0\scriptsize{$\pm$7.3}
           & \underline{58.0\scriptsize{$\pm$2.2}}
           & \underline{57.1\scriptsize{$\pm$2.9}}
           & \textbf{66.1\scriptsize{$\pm$0.5}}
           & \textbf{65.9\scriptsize{$\pm$0.5}}
           & \multirow{3}{*}{76.5\scriptsize{$\pm$0.9}}
           & \multirow{3}{*}{76.4\scriptsize{$\pm$0.9}} \\
  & 0.1\%  & 53.5\scriptsize{$\pm$2.3}
           & 53.0\scriptsize{$\pm$2.5}
           & \underline{54.2\scriptsize{$\pm$0.1}}
           & \underline{54.2\scriptsize{$\pm$0.1}}
           & 50.6\scriptsize{$\pm$3.7}
           & 48.4\scriptsize{$\pm$4.3}
           & 51.6\scriptsize{$\pm$2.4}
           & 51.0\scriptsize{$\pm$2.4}
           & 50.3\scriptsize{$\pm$2.7}
           & 44.3\scriptsize{$\pm$7.2}
           & 52.6\scriptsize{$\pm$3.4}
           & 51.9\scriptsize{$\pm$3.4}
           & \textbf{64.5\scriptsize{$\pm$0.5}}
           & \textbf{63.5\scriptsize{$\pm$0.6}}
           & 
           &  \\
  & 0.01\% & 51.2\scriptsize{$\pm$1.7}
           & 48.7\scriptsize{$\pm$4.5}
           & \underline{53.7\scriptsize{$\pm$0.9}}
           & \underline{51.8\scriptsize{$\pm$4.7}}
           & \underline{53.7\scriptsize{$\pm$0.9}}
           & \underline{51.8\scriptsize{$\pm$4.7}}
           & 50.9\scriptsize{$\pm$2.3}
           & 47.2\scriptsize{$\pm$3.2}
           & 50.3\scriptsize{$\pm$1.0}
           & 34.5\scriptsize{$\pm$4.3}
           & 51.0\scriptsize{$\pm$2.1}
           & 47.0\scriptsize{$\pm$3.7}
           & \textbf{62.9\scriptsize{$\pm$0.2}}
           & \textbf{60.4\scriptsize{$\pm$2.7}}
           & 
           &  \\
\midrule
\multirow{3}{*}{AD}

  & 1\%    & 77.2\scriptsize{$\pm$1.1}
           & 56.3\scriptsize{$\pm$6.1}
           & 72.9\scriptsize{$\pm$1.8}
           & \underline{65.3\scriptsize{$\pm$1.1}}
           & 76.1\scriptsize{$\pm$0.0}
           & 43.2\scriptsize{$\pm$0.0}
           & \underline{78.5\scriptsize{$\pm$0.6}}
           & 61.3\scriptsize{$\pm$3.4}
           & 76.1\scriptsize{$\pm$0.0}
           & 43.2\scriptsize{$\pm$0.0}
           & \underline{78.5\scriptsize{$\pm$0.5}}
           & 61.3\scriptsize{$\pm$2.9}
           & \textbf{81.7\scriptsize{$\pm$0.4}}
           & \textbf{74.9\scriptsize{$\pm$0.7}}
           & \multirow{3}{*}{85.5\scriptsize{$\pm$0.1}}
           & \multirow{3}{*}{78.4\scriptsize{$\pm$0.4}} \\
  & 0.1\%  & 75.7\scriptsize{$\pm$1.3}
           & 56.3\scriptsize{$\pm$5.8}
           & 67.8\scriptsize{$\pm$2.7}
           & \underline{61.4\scriptsize{$\pm$1.0}}
           & 76.0\scriptsize{$\pm$0.1}
           & 43.2\scriptsize{$\pm$0.0}
           & \underline{76.3\scriptsize{$\pm$1.7}}
           & 55.4\scriptsize{$\pm$7.6}
           & 76.1\scriptsize{$\pm$0.0}
           & 43.2\scriptsize{$\pm$0.0}
           & 76.2\scriptsize{$\pm$1.8}
           & 56.0\scriptsize{$\pm$7.6}
           & \textbf{81.5\scriptsize{$\pm$0.6}}
           & \textbf{70.8\scriptsize{$\pm$1.1}}
           & 
           &  \\
  & 0.01\% & 71.3\scriptsize{$\pm$4.1}
           & 51.6\scriptsize{$\pm$6.5}
           & 70.1\scriptsize{$\pm$2.2}
           & \underline{60.8\scriptsize{$\pm$0.8}}
           & 66.5\scriptsize{$\pm$1.9}
           & 60.6\scriptsize{$\pm$1.1}
           & 60.9\scriptsize{$\pm$12.7}
           & 43.8\scriptsize{$\pm$5.2}
           & \underline{76.1\scriptsize{$\pm$0.0}}
           & 43.2\scriptsize{$\pm$0.0}
           & 61.9\scriptsize{$\pm$11.3}
           & 45.0\scriptsize{$\pm$5.0}
           & \textbf{77.7\scriptsize{$\pm$1.1}}
           & \textbf{67.6\scriptsize{$\pm$3.4}}
           & 
           &  \\
\midrule
\multirow{3}{*}{JA}
  & 1\%    & 53.0\scriptsize{$\pm$0.8}
           & 31.2\scriptsize{$\pm$1.5}
           & 53.6\scriptsize{$\pm$1.0}
           & 33.4\scriptsize{$\pm$2.3}
           & 49.3\scriptsize{$\pm$0.7}
           & 24.9\scriptsize{$\pm$1.4}
           & 48.0\scriptsize{$\pm$0.7}
           & 19.9\scriptsize{$\pm$1.3}
           & 42.1\scriptsize{$\pm$5.4}
           & 14.8\scriptsize{$\pm$1.4}
           & \underline{54.7\scriptsize{$\pm$0.9}}
           & \underline{35.8\scriptsize{$\pm$1.5}}
           & \textbf{56.2\scriptsize{$\pm$0.8}}
           & \textbf{36.7\scriptsize{$\pm$1.8}}
           & \multirow{3}{*}{71.6\scriptsize{$\pm$0.2}}
           & \multirow{3}{*}{56.7\scriptsize{$\pm$0.5}} \\
  & 0.1\%  & 50.1\scriptsize{$\pm$1.3}
           & 29.9\scriptsize{$\pm$1.9}
           & \textbf{53.1\scriptsize{$\pm$1.0}}
           & 33.0\scriptsize{$\pm$2.3}
           & 44.1\scriptsize{$\pm$3.2}
           & 23.0\scriptsize{$\pm$1.0}
           & 46.6\scriptsize{$\pm$1.1}
           & 18.4\scriptsize{$\pm$2.7}
           & 46.0\scriptsize{$\pm$0.0}
           & 15.8\scriptsize{$\pm$0.0}
           & 51.3\scriptsize{$\pm$1.3}
           & \underline{34.2\scriptsize{$\pm$1.9}}
           & \underline{52.8\scriptsize{$\pm$1.7}}
           & \textbf{35.3\scriptsize{$\pm$1.8}}
           & 
           &  \\
  & 0.01\% & 41.3\scriptsize{$\pm$3.2}
           & 26.5\scriptsize{$\pm$2.4}
           & \textbf{47.4\scriptsize{$\pm$0.8}}
           & 27.1\scriptsize{$\pm$3.1}
           & 44.1\scriptsize{$\pm$2.6}
           & \underline{27.3\scriptsize{$\pm$1.2}}
           & 36.0\scriptsize{$\pm$11.4}
           & 21.0\scriptsize{$\pm$6.8}
           & 45.7\scriptsize{$\pm$0.0}
           & 15.8\scriptsize{$\pm$0.0}
           & 40.3\scriptsize{$\pm$4.4}
           & 26.3\scriptsize{$\pm$2.8}
           & \underline{45.9\scriptsize{$\pm$2.5}}
           & \textbf{32.2\scriptsize{$\pm$0.8}}
           & 
           &  \\
\midrule
\multirow{3}{*}{DA}

  & 1\%    & 53.1\scriptsize{$\pm$0.5}
           & 25.3\scriptsize{$\pm$1.3}
           & 48.1\scriptsize{$\pm$2.2}
           & \underline{32.3\scriptsize{$\pm$0.8}}
           & 49.4\scriptsize{$\pm$3.9}
           & 28.1\scriptsize{$\pm$3.1}
           & 53.1\scriptsize{$\pm$0.7}
           & 24.9\scriptsize{$\pm$1.2}
           & \underline{53.2\scriptsize{$\pm$0.6}}
           & 24.3\scriptsize{$\pm$0.9}
           & 53.1\scriptsize{$\pm$0.7}
           & 24.8\scriptsize{$\pm$1.0}
           & \textbf{53.5\scriptsize{$\pm$0.4}}
           & \textbf{33.3\scriptsize{$\pm$0.6}}
           & \multirow{3}{*}{58.6\scriptsize{$\pm$0.2}}
           & \multirow{3}{*}{38.7\scriptsize{$\pm$0.5}} \\
  & 0.1\%  & 49.4\scriptsize{$\pm$1.7}
           & 30.0\scriptsize{$\pm$1.9}
           & 46.6\scriptsize{$\pm$2.5}
           & \underline{32.1\scriptsize{$\pm$0.4}}
           & 49.0\scriptsize{$\pm$3.0}
           & 29.5\scriptsize{$\pm$0.8}
           & 49.4\scriptsize{$\pm$1.6}
           & 28.9\scriptsize{$\pm$1.9}
           & \underline{50.4\scriptsize{$\pm$1.5}}
           & 27.9\scriptsize{$\pm$1.7}
           & 49.4\scriptsize{$\pm$1.7}
           & 28.8\scriptsize{$\pm$1.8}
           & \textbf{51.8\scriptsize{$\pm$0.6}}
           & \textbf{34.4\scriptsize{$\pm$0.7}}
           &
           &  \\
  & 0.01\% & \underline{46.3\scriptsize{$\pm$1.8}}
           & 31.9\scriptsize{$\pm$1.5}
           & 39.7\scriptsize{$\pm$2.8}
           & \underline{32.6\scriptsize{$\pm$2.0}}
           & 34.3\scriptsize{$\pm$6.4}
           & 29.1\scriptsize{$\pm$5.3}
           & 41.0\scriptsize{$\pm$3.5}
           & 31.9\scriptsize{$\pm$1.4}
           & 41.5\scriptsize{$\pm$7.5}
           & 29.0\scriptsize{$\pm$3.8}
           & 40.8\scriptsize{$\pm$3.6}
           & 31.8\scriptsize{$\pm$1.6}           
           & \textbf{48.5\scriptsize{$\pm$2.4}}
           & \textbf{35.1\scriptsize{$\pm$1.1}}
           &
           &  \\
\midrule
\multirow{3}{*}{RS}
  & 1\%    & 67.9\scriptsize{$\pm$0.6}
           & 67.9\scriptsize{$\pm$0.6}
           & 67.4\scriptsize{$\pm$0.5}
           & 67.4\scriptsize{$\pm$0.5}
           & 54.3\scriptsize{$\pm$2.4}
           & 52.1\scriptsize{$\pm$3.4}
           & 65.5\scriptsize{$\pm$1.8}
           & 65.2\scriptsize{$\pm$2.3}
           & 50.0\scriptsize{$\pm$0.0}
           & 33.3\scriptsize{$\pm$0.0}
           & \underline{68.6\scriptsize{$\pm$0.4}}
           & \underline{68.6\scriptsize{$\pm$0.5}}
           & \textbf{69.0\scriptsize{$\pm$0.2}}
           & \textbf{69.0\scriptsize{$\pm$0.2}}
           & \multirow{3}{*}{78.3\scriptsize{$\pm$0.2}}
           & \multirow{3}{*}{78.2\scriptsize{$\pm$0.2}} \\
  & 0.1\%  & 63.5\scriptsize{$\pm$3.1}
           & 63.5\scriptsize{$\pm$3.2}
           & \underline{64.3\scriptsize{$\pm$1.3}}
           & \underline{64.2\scriptsize{$\pm$1.3}}
           & 47.7\scriptsize{$\pm$2.7}
           & 45.6\scriptsize{$\pm$3.3}
           & 61.3\scriptsize{$\pm$3.2}
           & 60.4\scriptsize{$\pm$4.0}
           & 50.0\scriptsize{$\pm$0.5}
           & 34.1\scriptsize{$\pm$2.3}
           & 63.7\scriptsize{$\pm$2.7}
           & 63.5\scriptsize{$\pm$2.8}
           & \textbf{65.1\scriptsize{$\pm$0.8}}
           & \textbf{64.9\scriptsize{$\pm$0.9}}
           & 
           &  \\
  & 0.01\% & 55.5\scriptsize{$\pm$6.3}
           & 54.8\scriptsize{$\pm$6.6}
           & \textbf{61.6\scriptsize{$\pm$1.7}}
           & \textbf{61.5\scriptsize{$\pm$1.8}}
           & \underline{61.2\scriptsize{$\pm$2.8}}
           & \underline{60.9\scriptsize{$\pm$3.0}}
           & 54.8\scriptsize{$\pm$5.3}
           & 50.9\scriptsize{$\pm$8.4}
           & 50.0\scriptsize{$\pm$0.5}
           & 33.3\scriptsize{$\pm$0.0}
           & 57.7\scriptsize{$\pm$5.6}
           & 57.6\scriptsize{$\pm$5.6}
           & 60.9\scriptsize{$\pm$1.7}
           & 60.2\scriptsize{$\pm$2.0}
           & 
           &  \\
\midrule

\multirow{3}{*}{EP}
  & 1\%    & \underline{64.4\scriptsize{$\pm$1.1}}
           & \underline{64.3\scriptsize{$\pm$1.2}}
           & 62.2\scriptsize{$\pm$0.6}
           & 62.0\scriptsize{$\pm$0.7}
           & 59.7\scriptsize{$\pm$1.3}
           & 58.4\scriptsize{$\pm$2.3}
           & OOM
           & OOM
           & OOM
           & OOM
           & OOT
           & OOT
           & \textbf{78.9\scriptsize{$\pm$0.4}}
           & \textbf{78.9\scriptsize{$\pm$0.5}}
           & \multirow{3}{*}{86.7\scriptsize{$\pm$0.2}}
           & \multirow{3}{*}{86.7\scriptsize{$\pm$0.2}} \\
  & 0.1\%  & \underline{60.1\scriptsize{$\pm$0.9}}
           & \underline{59.8\scriptsize{$\pm$1.0}}
           & 59.0\scriptsize{$\pm$0.8}
           & 58.5\scriptsize{$\pm$1.0}
           & 56.7\scriptsize{$\pm$0.8}
           & 55.8\scriptsize{$\pm$1.3}
           & OOM
           & OOM
           & OOM
           & OOM
           & OOT
           & OOT
           & \textbf{75.2\scriptsize{$\pm$1.5}}
           & \textbf{74.8\scriptsize{$\pm$1.8}}
           & 
           &  \\
  & 0.01\% & \underline{53.6\scriptsize{$\pm$0.9}}
           & \underline{53.2\scriptsize{$\pm$0.8}}
           & 52.7\scriptsize{$\pm$0.7}
           & 51.8\scriptsize{$\pm$1.1}
           & 52.7\scriptsize{$\pm$0.5}
           & 51.7\scriptsize{$\pm$1.2}
           & 50.0\scriptsize{$\pm$0.3}
           & 37.9\scriptsize{$\pm$5.8}
           & OOT
           & OOT
           & OOT
           & OOT
           & \textbf{71.0\scriptsize{$\pm$1.8}}
           & \textbf{70.7\scriptsize{$\pm$2.0}}
           & 
           &  \\
\midrule
\multirow{3}{*}{AI}

  & 1\%    & 55.5\scriptsize{$\pm$0.4}
           & 45.1\scriptsize{$\pm$3.1}
           & 51.2\scriptsize{$\pm$0.3}
           & \underline{51.1\scriptsize{$\pm$0.4}}
           & 55.4\scriptsize{$\pm$0.1}
           & 36.0\scriptsize{$\pm$0.3}
           &\underline{55.9\scriptsize{$\pm$0.4}}
           & 48.4\scriptsize{$\pm$3.4}
           & OOT
           & OOT
           & \underline{55.9\scriptsize{$\pm$0.5}}
           & 48.5\scriptsize{$\pm$3.3}
           & \textbf{61.1\scriptsize{$\pm$0.3}}
           & \textbf{59.8\scriptsize{$\pm$0.2}}
           & \multirow{3}{*}{64.6\scriptsize{$\pm$0.1}}
           & \multirow{3}{*}{62.7\scriptsize{$\pm$0.3}} \\
  & 0.1\%  & 54.3\scriptsize{$\pm$0.6}
           & 47.7\scriptsize{$\pm$1.9}
           & 51.2\scriptsize{$\pm$0.4}
           & 51.1\scriptsize{$\pm$0.6}
           & \underline{55.2\scriptsize{$\pm$0.1}}
           & 36.2\scriptsize{$\pm$0.3}
           & 54.4\scriptsize{$\pm$1.0}
           & \underline{51.5\scriptsize{$\pm$1.9}}
           & 54.0\scriptsize{$\pm$1.1}
           & 48.0\scriptsize{$\pm$4.1}
           & 54.3\scriptsize{$\pm$1.0}
           & 51.4\scriptsize{$\pm$2.0}
           & \textbf{61.5\scriptsize{$\pm$0.4}}
           & \textbf{58.4\scriptsize{$\pm$0.7}}
           & 
           &  \\
  & 0.01\% & 51.7\scriptsize{$\pm$1.5}
           & 49.7\scriptsize{$\pm$1.9}
           & 52.3\scriptsize{$\pm$0.9}
           & 50.5\scriptsize{$\pm$0.5}
           & 53.5\scriptsize{$\pm$0.4}
           & 43.0\scriptsize{$\pm$4.4}
           & \underline{53.8\scriptsize{$\pm$1.3}}
           & \underline{50.6\scriptsize{$\pm$1.5}}
           & 51.4\scriptsize{$\pm$2.0}
           & 48.7\scriptsize{$\pm$2.7}
           & \underline{53.8\scriptsize{$\pm$1.3}}
           & 50.5\scriptsize{$\pm$1.6}
           & \textbf{61.1\scriptsize{$\pm$0.5}}
           & \textbf{57.5\scriptsize{$\pm$1.1}}
           & 
           &  \\
\midrule
\multirow{3}{*}{CO}

  & 1\%    & 70.9\scriptsize{$\pm$0.6}
           & 39.2\scriptsize{$\pm$0.6}
           & 67.3\scriptsize{$\pm$0.3}
           & 37.2\scriptsize{$\pm$0.7}
           & 65.6\scriptsize{$\pm$1.2}
           & 28.4\scriptsize{$\pm$0.7}
           & 62.4\scriptsize{$\pm$1.3}
           & 35.9\scriptsize{$\pm$1.2}
           & OOT
           & OOT
           & \underline{72.7\scriptsize{$\pm$0.2}}
           & \underline{41.8\scriptsize{$\pm$1.0}}
           & \textbf{73.4\scriptsize{$\pm$0.2}}
           & \textbf{51.6\scriptsize{$\pm$2.6}}
           & \multirow{3}{*}{87.6\scriptsize{$\pm$0.1}}
           & \multirow{3}{*}{81.3\scriptsize{$\pm$0.2}} \\
  & 0.1\%  & 63.9\scriptsize{$\pm$1.1}
           & 32.4\scriptsize{$\pm$2.1}
           & 64.3\scriptsize{$\pm$0.7}
           & 32.4\scriptsize{$\pm$1.4}
           & 56.3\scriptsize{$\pm$0.3}
           & 19.3\scriptsize{$\pm$0.9}
           & 61.5\scriptsize{$\pm$1.1}
           & 33.0\scriptsize{$\pm$1.8}
           & 47.9\scriptsize{$\pm$2.6}
           & 9.7\scriptsize{$\pm$0.8}
           & \underline{66.7\scriptsize{$\pm$0.8}}
           & \underline{38.3\scriptsize{$\pm$1.7}}
           & \textbf{69.6\scriptsize{$\pm$0.6}}
           & \textbf{46.4\scriptsize{$\pm$1.3}}
           & 
           &  \\
  & 0.01\% & 52.3\scriptsize{$\pm$2.1}
           & 27.0\scriptsize{$\pm$2.7}
           & \underline{54.3\scriptsize{$\pm$1.3}}
           & \underline{36.8\scriptsize{$\pm$2.0}}
           & 44.6\scriptsize{$\pm$1.0}
           & 29.4\scriptsize{$\pm$2.3}
           & 52.1\scriptsize{$\pm$3.3}
           & 24.5\scriptsize{$\pm$4.0}
           & 46.9\scriptsize{$\pm$3.9}
           & 9.8\scriptsize{$\pm$1.4}
           & 52.8\scriptsize{$\pm$2.3}
           & 26.8\scriptsize{$\pm$4.0}
           & \textbf{60.9\scriptsize{$\pm$0.9}}
           & \textbf{45.4\scriptsize{$\pm$1.7}}
           & 
           &  \\
\midrule

\multirow{3}{*}{HI}
  & 1\%    & 52.4\scriptsize{$\pm$0.4}
           & 52.3\scriptsize{$\pm$0.4}
           & 52.5\scriptsize{$\pm$0.4}
           & 48.8\scriptsize{$\pm$1.1}
           & 52.2\scriptsize{$\pm$0.3}
           & 44.1\scriptsize{$\pm$2.3}
           & 50.3\scriptsize{$\pm$0.4}
           & 48.6\scriptsize{$\pm$2.0}
           & OOT
           & OOT
           & \underline{53.7\scriptsize{$\pm$0.4}}
           & \underline{53.4\scriptsize{$\pm$0.4}}
           & \textbf{54.6\scriptsize{$\pm$0.4}}
           & \textbf{54.5\scriptsize{$\pm$0.5}}
           & \multirow{3}{*}{75.2\scriptsize{$\pm$0.0}}
           & \multirow{3}{*}{75.2\scriptsize{$\pm$0.1}} \\
  & 0.1\%  & 50.8\scriptsize{$\pm$0.4}
           & 50.7\scriptsize{$\pm$0.4}
           & 51.1\scriptsize{$\pm$0.5}
           & 48.7\scriptsize{$\pm$0.7}
           & 51.4\scriptsize{$\pm$0.4}
           & 45.4\scriptsize{$\pm$2.5}
           & 50.6\scriptsize{$\pm$0.4}
           & 47.7\scriptsize{$\pm$2.0}
           & 50.0\scriptsize{$\pm$0.0}
           & 33.3\scriptsize{$\pm$0.0}
           & \underline{51.4\scriptsize{$\pm$0.3}}
           & \underline{51.3\scriptsize{$\pm$0.3}}
           & \textbf{52.2\scriptsize{$\pm$0.5}}
           & \textbf{52.1\scriptsize{$\pm$0.5}}
           & 
           &  \\
  & 0.01\% & 50.4\scriptsize{$\pm$0.5}
           & 50.3\scriptsize{$\pm$0.5}
           & 50.4\scriptsize{$\pm$0.7}
           & 49.2\scriptsize{$\pm$1.3}
           & 50.6\scriptsize{$\pm$0.3}
           & 39.6\scriptsize{$\pm$2.4}
           & 50.3\scriptsize{$\pm$0.7}
           & 47.3\scriptsize{$\pm$3.6}
           & 50.0\scriptsize{$\pm$0.0}
           & 33.3\scriptsize{$\pm$0.0}
           & \underline{50.8\scriptsize{$\pm$0.6}}
           & \underline{50.7\scriptsize{$\pm$0.6}}
           & \textbf{51.9\scriptsize{$\pm$0.2}}
           & \textbf{51.4\scriptsize{$\pm$0.2}}
           & 
           &  \\
\midrule
\multirow{3}{*}{MI}
  & 1\%    & \underline{54.1\scriptsize{$\pm$0.2}}
           & 21.4\scriptsize{$\pm$0.8}
           & 53.3\scriptsize{$\pm$0.5}
           & \underline{22.2\scriptsize{$\pm$0.1}}
           & 52.7\scriptsize{$\pm$0.2}
           & 16.8\scriptsize{$\pm$0.8}
           & 50.2\scriptsize{$\pm$4.7}
           & 20.8\scriptsize{$\pm$1.6}
           & OOT
           & OOT
           & OOM
           & OOM
           & \textbf{54.4\scriptsize{$\pm$0.0}}
           & \textbf{22.7\scriptsize{$\pm$0.1}}
           & \multirow{3}{*}{57.1\scriptsize{$\pm$0.1}}
           & \multirow{3}{*}{28.9\scriptsize{$\pm$0.5}} \\
  & 0.1\%  & 52.8\scriptsize{$\pm$0.7}
           & \underline{20.1\scriptsize{$\pm$2.7}}
           & 52.5\scriptsize{$\pm$0.8}
           & 19.8\scriptsize{$\pm$2.8}
           & 51.2\scriptsize{$\pm$1.5}
           & 16.4\scriptsize{$\pm$2.0}
           & 52.2\scriptsize{$\pm$1.9}
           & 17.3\scriptsize{$\pm$3.0}
           & OOT
           & OOT
           & \underline{53.4\scriptsize{$\pm$0.6}}
           & 19.9\scriptsize{$\pm$2.3}
           & \textbf{53.6\scriptsize{$\pm$0.1}}
           & \textbf{21.5\scriptsize{$\pm$0.4}}
           & 
           &  \\
  & 0.01\% & 49.8\scriptsize{$\pm$1.6}
           & \underline{20.2\scriptsize{$\pm$2.7}}
           & 50.2\scriptsize{$\pm$1.1}
           & 19.6\scriptsize{$\pm$3.9}
           & 48.2\scriptsize{$\pm$3.3}
           & 15.7\scriptsize{$\pm$1.5}
           & 51.9\scriptsize{$\pm$0.2}
           & 13.8\scriptsize{$\pm$0.3}
           & \underline{52.0\scriptsize{$\pm$0.0}}
           & 13.7\scriptsize{$\pm$0.0}
           & 50.7\scriptsize{$\pm$1.2}
           & 19.8\scriptsize{$\pm$1.5}
           & \textbf{53.2\scriptsize{$\pm$0.3}}
           & \textbf{20.6\scriptsize{$\pm$0.5}}
           & 
           &  \\
\bottomrule
\end{tabular}%
}
\end{table*}
\subsection{Overall Performance} 
\label{exp_effect}

To validate the effectiveness of $\text{C}^{2}\text{TC}$, we compare it against six baselines on the classification task across condensation ratios $r \in \{1\%, 0.1\%, 0.01\%\}$, as reported in Table~\ref{tab:class}.
We make the following key observations: 
(1) Overall, $\text{C}^{2}\text{TC}$ demonstrates superior performance compared to two categories of baselines. 
It achieves state-of-the-art accuracy on 8 out of 10 datasets and best Macro-F1 score in 29 out of 30 experimental cases. 
On most datasets, models trained on our condensed data achieve accuracy comparable to full-data training. 
When we reduce the size of the tabular data by 99.0\%, $\text{C}^{2}\text{TC}$ can approximate the original test accuracy by over 86\% on 7 out of the 10 datasets.
\red{(2)} Our method typically delivers stronger class-balanced performance, with especially pronounced gains on imbalanced datasets. For example, 
at $r=1\%$, it improves Macro-F1 over the best baseline by 9.6\% on AD and 9.8\% on CO.
These results indicate that our class-adaptive condensation algorithm can better preserves minority-class performance, thereby offering higher utility.
(3) While state-of-the-art DC methods such as GM and MTT generally outperform traditional coreset baselines, they often suffer from OOM or OOT failures on large datasets (e.g., EP and MI). In contrast, our method typically attains state-of-the-art performance and remains scalable across all datasets, offering a more favorable efficiency–effectiveness trade-off.

\subsection{Cross-architecture Generalization}
\label{exp_general}
To evaluate the generalization of $\text{C}^{2}\text{TC}$, Table~\ref{tab:cross-arch} reports results across six benchmark architectures for deep tabular learning: three MLP-based models (MLP, MLP\_PLR~\cite{gorishniy2022embeddings}, and RealMLP~\cite{holzmller2024better}) and three Transformer-based models (FT\_Transformer (FT\_T)~\cite{gorishniy2021revisiting}, TabNet~\cite{arik2021tabnet}, and TabR~\cite{gorishniy2024tabr}). 
For comparison, we also report results of Random, DM, GM, and MTT.  
On the high-dimensional dataset EP, existing DC methods fail due to OOM or OOT issues, so results are reported only for Random and $\text{C}^{2}\text{TC}$, with $\text{C}^{2}\text{TC}$ outperforming Random by approximately 6\% in terms of average accuracy. 
On MI, GM is also omitted due to OOT.  
Overall, $\text{C}^{2}\text{TC}$ achieves the highest average accuracy across datasets, consistently benefiting a variety of deep tabular models. 
For instance, compared to MTT, the strongest competing DC baseline, $\text{C}^{2}\text{TC}$ yields an average improvement of 3.1\% on AD and more than 8\% on EL.  
These results confirm the superior generalization ability of $\text{C}^{2}\text{TC}$, which stems from our theoretically grounded objective that ensures the quality of the condensed datasets, together with the model-free design of our algorithm that avoids architecture-specific biases and thereby enhances generalization across diverse models.
\begin{table}[t]
\caption{Generalization of DC methods across deep tabular learning models. Avg. stands for the
average test accuracy of MLP, MLP\_PLR, RealMLP, FT\_T, TabNet, and TabR. Results for DM, GM, and MTT on EP, and GM on MI, are omitted as they fail to scale to these datasets due to time or memory constraints. All results are reported in terms of accuracy.}
\label{tab:cross-arch}
\centering
\scriptsize
\resizebox{\linewidth}{!}
{\begin{tabular}{llccc ccc c}
\toprule
\multirow{2}{*}{\textbf{Dataset}} 
& \multirow{2}{*}{\textbf{Method}} 
& \multicolumn{3}{c}{\textbf{MLP-Based}} 
& \multicolumn{3}{c}{\textbf{Transformer-Based}} 
& \multirow{2}{*}{\textbf{Avg.}}  \\
\cmidrule(lr){3-5} \cmidrule(lr){6-8}
& & MLP & MLP\_PLR & RealMLP & FT\_T & TabNet & TabR & \\
\midrule
\multirow{5}{*}{EL}
  & Random & 53.5 & 55.8 & 52.3 & 53.9 & 50.3 & 52.2 & 53.0  \\
  & DM     & 51.6 & 55.1 & 55.1 & 57.2 & 49.3 & 53.0 & 53.6  \\
  & GM     & 50.3 & 50.6 & 50.3 & 50.6 & 50.3 & 50.6 & 50.5  \\
  & MTT    & 52.6 & 57.0 & 52.7 & 56.0 & \textbf{51.1} & 53.9 & 53.9 \\
  & $\text{C}^{2}\text{TC}$   & \textbf{64.5} & \textbf{67.5} & \textbf{62.8} & \textbf{64.5} & 51.0 & \textbf{61.3} & \textbf{62.0}  \\

\midrule
\multirow{5}{*}{AD}
  & Random & 75.7 & 77.1 & 73.6 & 76.2 & 64.3 & 76.3 & 73.9  \\
  & DM     & 76.3 & 76.0 & 73.8 & 77.2 & 71.6 & 77.6 & 75.4  \\
  & GM     & 76.1 & 75.9 & 68.7 & 76.6 & 69.8 & 76.1 & 73.9  \\
  & MTT    & 76.2 & 77.3 & 73.9 & 76.8 & 71.1 & 77.7 & 75.5  \\
  & $\text{C}^{2}\text{TC}$   & \textbf{81.5} & \textbf{80.3} & \textbf{77.8} & \textbf{79.3} & \textbf{72.5} & \textbf{79.9} & \textbf{78.6}  \\

\midrule
\multirow{2}{*}{EP}
  & Random & 60.1 & 66.8 & \textbf{50.4} & 50.0 & 50.1 & 53.8 & 55.2  \\
  & $\text{C}^{2}\text{TC}$   & \textbf{75.2} & \textbf{83.7} & 50.2 & \textbf{50.3} & \textbf{50.5} &\textbf{56.8} & \textbf{61.1}  \\
\midrule
\multirow{5}{*}{CO}
  & Random & 63.9 & 67.1 & 52.5 & 61.6 & 56.6 & 69.2 & 61.8  \\
  & DM     & 61.5 & 56.9 & \textbf{54.3} & 57.7 & 57.3 & 60.0 & 57.9  \\
  & GM     & 47.9 & 51.9 & 43.8 & 53.1 & 54.9 & 45.6 & 49.5  \\
  & MTT    & 66.7 & 68.1 & 53.5 & 63.6 & 60.1 & \textbf{69.7} & 63.6  \\
  & $\text{C}^{2}\text{TC}$   & \textbf{69.6} & \textbf{70.3} & 53.9 & \textbf{64.7} & \textbf{61.5} & 68.5 & \textbf{64.8}  \\

\midrule
\multirow{4}{*}{MI}
  & Random & 52.8 & 52.8 & 52.0 & 51.8 & 46.2 & \textbf{52.7} & 51.4  \\
  & DM     & 52.2 & \textbf{53.2} & 51.7 & 52.0 & 50.9 & 46.3 & 51.1  \\
  & MTT    & 53.4 & 52.0 & 52.1 & \textbf{52.1} & 49.8 & 52.5 & 52.0  \\
  & $\text{C}^{2}\text{TC}$   & \textbf{53.6} & 53.1 & \textbf{52.2} & 52.0 & \textbf{51.5} & 52.2 & \textbf{52.4}  \\
\bottomrule
\end{tabular}}
\end{table}

\subsection {Efficiency Evaluation}
\label{exp_efficiency}

\label{effi_app}
\noindent\textbf{Condensation time over all datasets.}
In Figure~\ref{fig:fig1}, we first compare the condensation time of our approach with state-of-the-art DC
baselines across all tabular datasets. 
As can be seen, $\text{C}^{2}\text{TC}$ exhibits significantly lower condensation time compared to existing DC methods, consistently achieving at least 2 orders of magnitude speedup across all datasets.
For example, on the largest dataset MI, $\text{C}^{2}\text{TC}$ only takes 39.2 seconds during condensation, while MTT requires over 78 hours to complete the synthesis process and GM fails to finish within the time limit.
While on the high-dimensional dataset (EP), existing DC methods either suffer from OOT or OOM issues and are therefore not included in the figure, whereas $\text{C}^{2}\text{TC}$ remains scalable, completing condensation in under 72 seconds.
This remarkable efficiency gain can be attributed to our training-free design, which avoids the Siamese architecture and gradient-based iterative updates adopted by prior methods, thereby eliminating major computational bottlenecks.
\begin{figure} [t]
    \centering
    \includegraphics[width=1\linewidth]{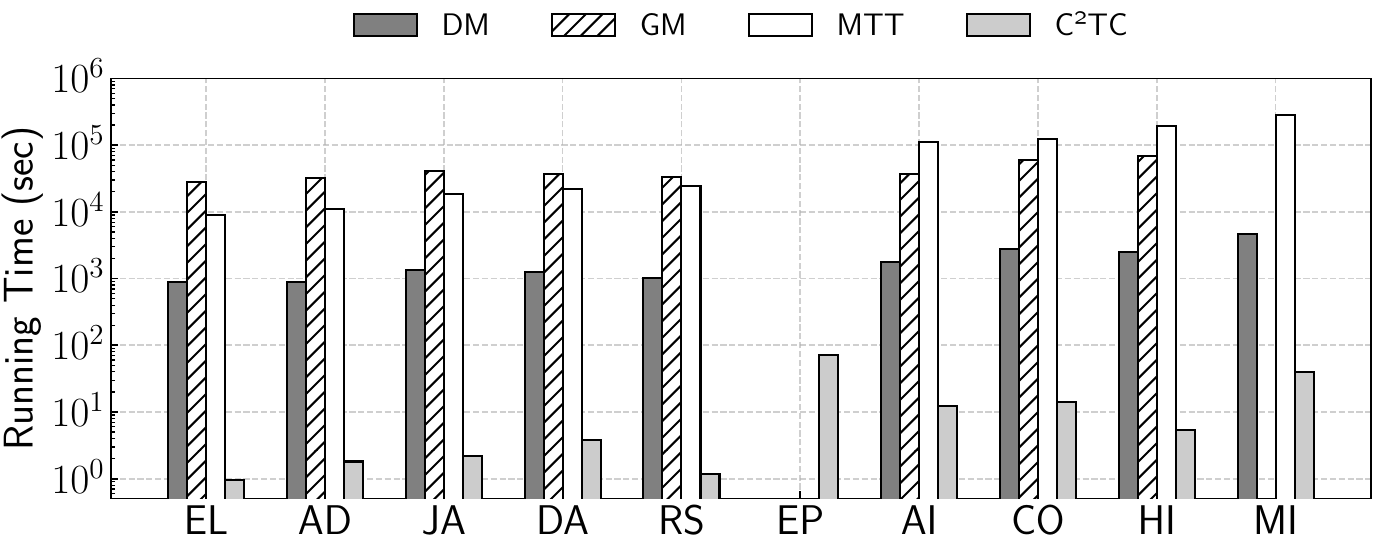}
     \caption{Condensation time comparison.}
    \label{fig:fig1}
\end{figure}

\noindent\textbf{Condensation time by varying $N$ and $F$.}
In this experiment, we study the scalability of condensation methods with respect to sample size $N$ and feature dimension $F$ on the MI and EP datasets, respectively. 
Specifically, on MI we vary the sample size from 20\% to 80\% to form new datasets under the default reduction rate (0.1\%), 
whereas on EP we fix the sample size and vary the feature dimension from 20\% to 80\% under a reduction rate of 0.01\%.
Figure~\ref{fig:fig3}(a) and Figure~\ref{fig:fig3}(b) show the condensation time of $\text{C}^{2}\text{TC}$ and the three DC methods with varying $N$ and $F$, respectively.
As observed, $\text{C}^{2}\text{TC}$ outperforms the other algorithms by a significant margin and scales well. 
For example, when the percentage of $N$ is 20\% in Figure~\ref{fig:fig3}(a), 
the condensation times of DM, GM, and MTT are 2340.2 seconds, $6.0 \times 10^{4}$ seconds, and $4.3 \times 10^{4}$ seconds, respectively, 
whereas $\text{C}^{2}\text{TC}$ requires only $15.2$ seconds. 
When the percentage of $N$ increases to 80\%, $\text{C}^{2}\text{TC}$ completes in $38.1$ seconds, while DM, GM, and MTT scale up to 3943.7, $1.1 \times 10^{5}$, and $2.3 \times 10^{5}$ seconds, respectively.  
In Figure~\ref{fig:fig3}(b), we find that $\text{C}^{2}\text{TC}$ consistently achieves at least 2 orders of magnitude speedup over existing DC methods across different ratios of $F$ on the EP dataset. 
As $F$ increases from 20\% to 80\%, the condensation time of $\text{C}^{2}\text{TC}$ rises only slightly from $4.6$ seconds to $14.4$ seconds. 
In contrast, the runtime of the fastest DC baseline, DM, grows rapidly from 1920.7 seconds to 6749.5 seconds.  
These results highlight the superior scalability of $\text{C}^{2}\text{TC}$, in line with our complexity analysis.

\begin{figure}[t] 
  \centering
  \includegraphics[width=0.7\linewidth]{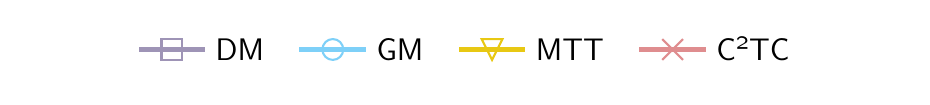}
  \begin{subfigure}[b]{0.48\columnwidth} 
    \centering

    \includegraphics[width=\linewidth]{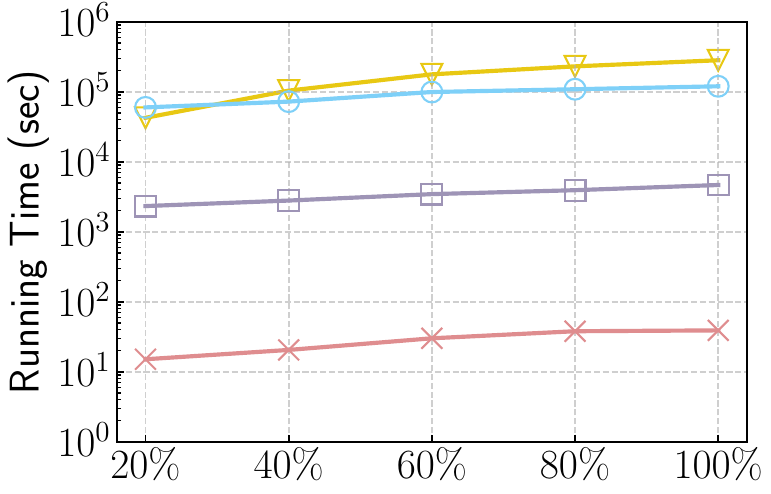}
    \caption{MI (Vary \#samples $N$)}

  \end{subfigure}
    \hfill
    \begin{subfigure}[b]{0.48\columnwidth}
    \centering

    \includegraphics[width=\linewidth]{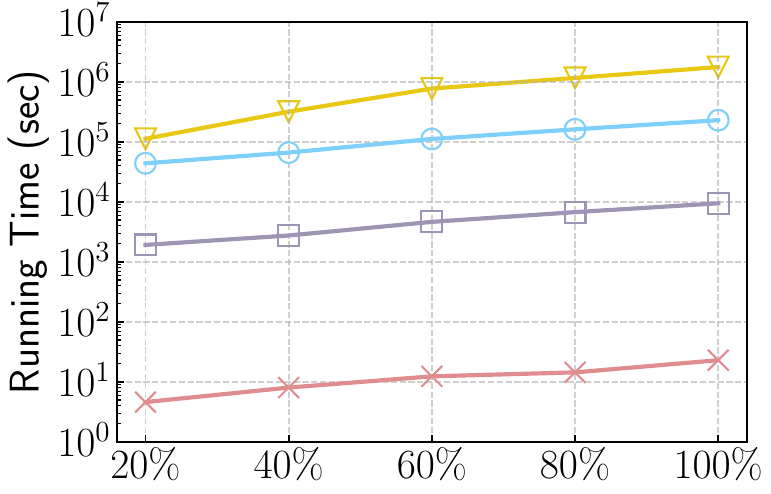}
    \caption{EP (Vary \#features $F$)}

  \end{subfigure}

  \caption{Condensation time by varying $N$ and $F$.}
  \label{fig:fig3}
\end{figure}

\begin{figure}[t] 
  \centering
  \includegraphics[width=0.7\linewidth]{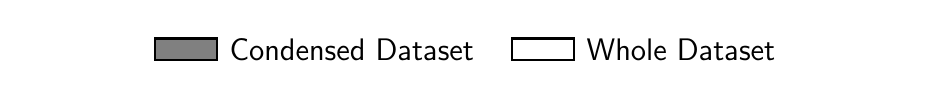}
  \begin{subfigure}[b]{0.32\columnwidth} 
    \centering
    \includegraphics[width=\linewidth]{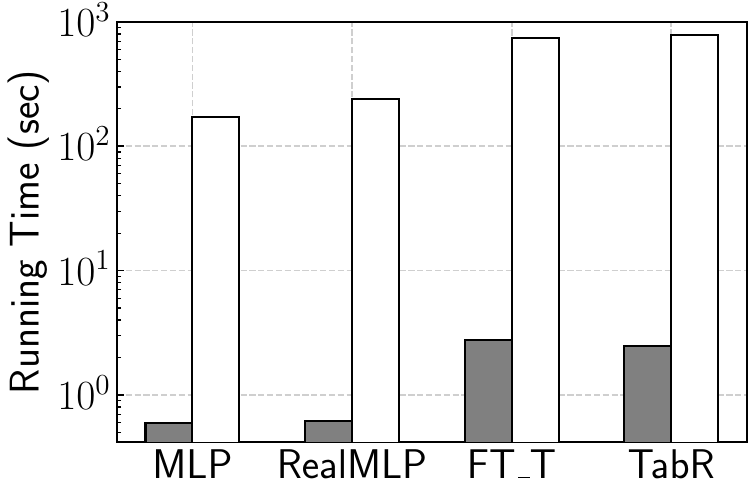}
    \caption{EL}
  \end{subfigure}
    \hfill
    \begin{subfigure}[b]{0.32\columnwidth} 
    \centering
    \includegraphics[width=\linewidth]{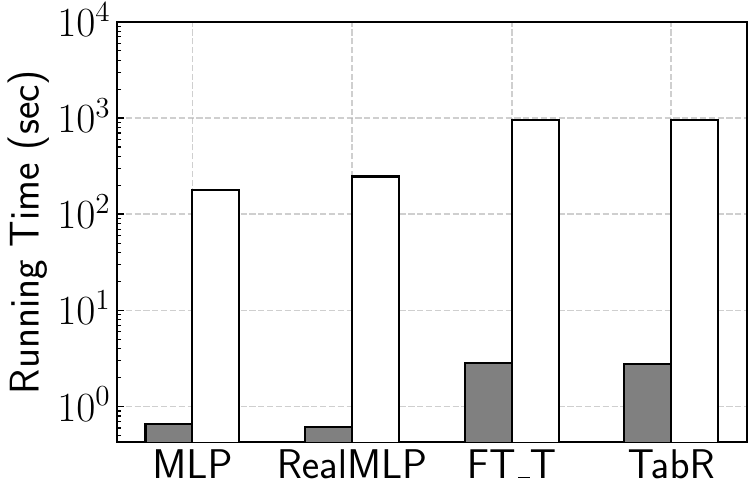}
    \caption{AD}
  \end{subfigure}
  \hfill
  \begin{subfigure}[b]{0.32\columnwidth} 
    \centering
    \includegraphics[width=\linewidth]{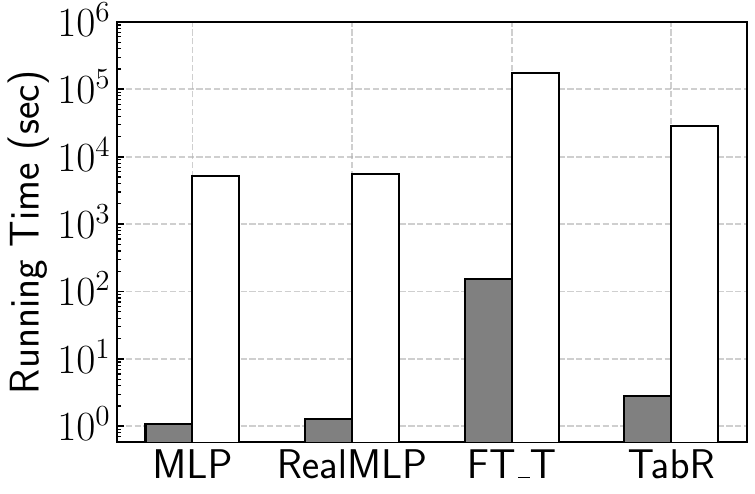}
    \caption{EP}
    
    \label{fig:svg1}
  \end{subfigure}

    \vspace{1mm}
  \hfill
  \begin{subfigure}[b]{0.32\columnwidth} 
    \centering
     
    \includegraphics[width=\linewidth]{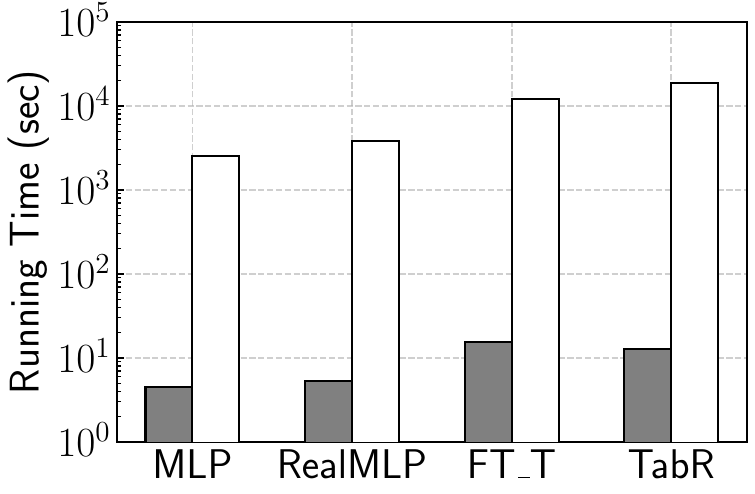}
    \caption{CO}
  \end{subfigure}  
  \hfill
  \begin{subfigure}[b]{0.32\columnwidth} 
    \centering
    \includegraphics[width=\linewidth]{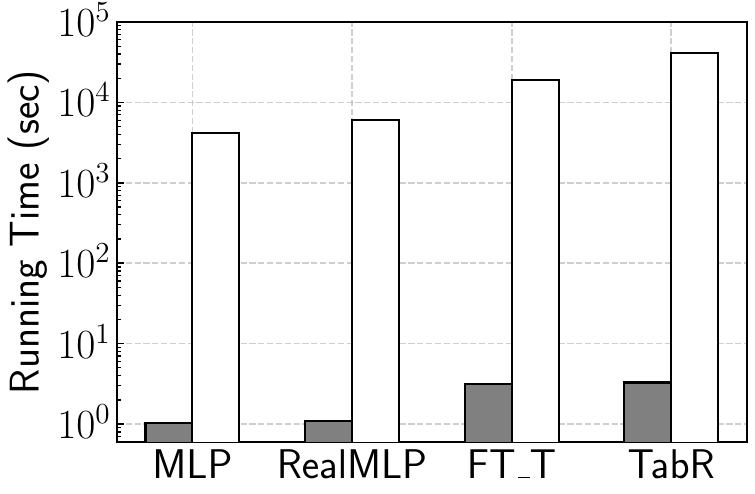}
    \caption{HI}
  \end{subfigure}
  \hfill
  \begin{subfigure}[b]{0.32\columnwidth} 
    \centering
    \includegraphics[width=\linewidth]{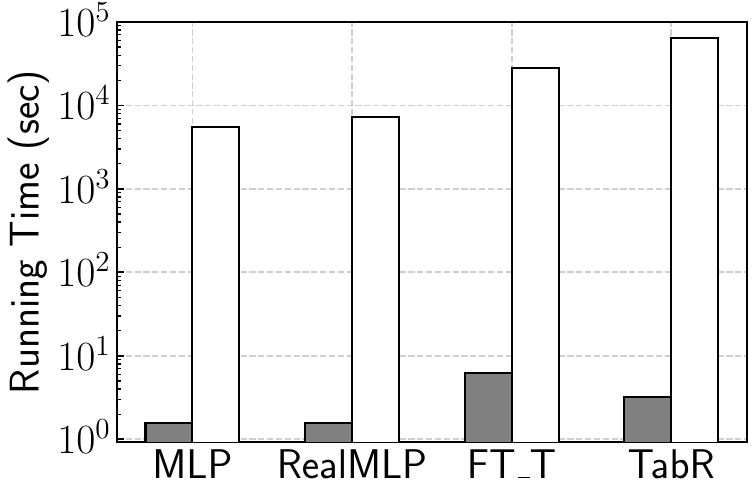}
    \caption{MI}
  \end{subfigure}

  \caption{\myblue{Training time on condensed vs. whole datasets.}}
  \label{fig:fig2}
\end{figure}

\noindent\textbf{Model training time on condensed data.} We further compare the training time of deep models on the condensed datasets against their original counterparts. Specifically, we evaluate four representative tabular learning architectures, namely MLP, RealMLP, FT-Transformer (FT-T), and TabR, on six datasets with different scales (EL, AD, EP, CO, HI, and MI).
As shown in Figure~\ref{fig:fig2}, the training time on the condensed datasets is significantly reduced compared with that on the original datasets in all cases.
For example, on the HI dataset, training RealMLP on the original data takes over 5993 seconds, whereas training on the small-scale synthetic data requires only about 1 second.
On the largest MI dataset, training TabR on the condensed dataset is accelerated by over 4 orders of magnitude compared with training on the original dataset,
showing the efficiency and practicality of tabular dataset condensation.

\subsection {Ablation Study}
\label{exp_ablation}

\begin{table}[t]
\caption{Comparison of label allocation strategies by Accuracy (Acc) and Macro-F1 (MF1).}
\label{tab:k_ablation_combined}
\centering
\scriptsize
\resizebox{\linewidth}{!}
{\begin{tabular}{l cc cc cc cc cc}
\toprule
\multirow{2}{*}{\textbf{Strategy}} 
& \multicolumn{2}{c}{\textbf{EL}} 
& \multicolumn{2}{c}{\textbf{AD}} 
& \multicolumn{2}{c}{\textbf{EP}} 
& \multicolumn{2}{c}{\textbf{CO}} 
& \multicolumn{2}{c}{\textbf{MI}} \\
\cmidrule(lr){2-3} \cmidrule(lr){4-5} \cmidrule(lr){6-7} \cmidrule(lr){8-9} \cmidrule(lr){10-11}
& Acc & MF1 & Acc & MF1 & Acc & MF1 & Acc & MF1 & Acc & MF1 \\
\midrule
Ratio     & 63.0 & 62.9 & 80.4 & 69.7 & 69.9 & 69.0 & 65.6 & 33.1 & 53.5 & 20.9 \\
FIPC       & 63.0 & 62.9 & 74.2 & 70.7 & 69.9 & 69.0 & 47.3 & 40.5 & 34.2 & 20.8 \\
$\text{C}^{2}\text{TC}$ & \textbf{64.5} & \textbf{63.5} & \textbf{81.5} & \textbf{70.8} & \textbf{75.2} & \textbf{74.8} & \textbf{69.6} & \textbf{46.4} & \textbf{53.6} & \textbf{21.5} \\
\bottomrule
\end{tabular}}
\end{table}

\noindent\textbf{Analysis of label allocation strategies.}
To validate the effectiveness of our adaptive allocation strategy, which is developed on top of our proposed training-free objective to appropriately determine the per-class cluster number \revise{${n'_i}$}, we replace it with two static label allocation strategies, Ratio and FIPC.
Ratio allocates clusters proportionally to the original class distribution, whereas FIPC assigns an equal number of synthetic samples to each class.
\revise{As shown in Table~\ref{tab:k_ablation_combined}, our class-adaptive strategy consistently achieves superior performance on both metrics across all datasets.
For instance, it improves accuracy by 5.3\% and Macro-F1 by 5.8\% on EP, and by 4.0\% and 5.9\% on CO, demonstrating strong robustness across different class distribution scenarios. This is because our $\text{C}^{2}\text{TC}$ dynamically balances the influence of majority and minority classes, leading to a better trade-off between overall accuracy and per-class performance.
Compared with FIPC, Ratio achieves higher accuracy but lower Macro-F1 on imbalanced datasets (AD, CO, MI), as it allocates more clusters to majority classes and overlooks minority ones. FIPC, on the other hand, enforces uniform allocation across classes, improving minority-class performance at the cost of overall accuracy dominated by majority classes.}




\begin{table}[t]
\caption{\red{Ablation study of $\text{C}^{2}\text{TC}$ modules.}}
\label{tab:ablation}
\centering
\scriptsize
\setlength{\tabcolsep}{4pt}
\renewcommand{\arraystretch}{1.1}
\resizebox{\linewidth}{!}{
\begin{tabular}{lccccccc}
\toprule
\textbf{Modules} & Metrics  & AD & JA & DA & AI & CO & MI \\
\midrule
w/o Soft Step Size  & Acc    & 74.5 & 50.9 & 36.8 & --   & 60.9 & 46.6  \\
w/o Soft Allocation & Acc    & --   & 51.7 & 50.5 & --   & 58.3 & 49.8  \\
$\text{C}^{2}\text{TC}$                & Acc  & 81.5 & 52.8 & 51.8 & 61.5 & 69.6 & 53.6  \\
\midrule
w/o Scale Factor       & Macro-F1    & 69.0 & 29.2 & 32.6 & --   & 35.8 & 17.0 \\
$\text{C}^{2}\text{TC}$                & Macro-F1  & 70.8 & 35.3 & 34.4 & 58.4 & 46.4 & 21.5 \\
\bottomrule
\end{tabular}}
\end{table}



\noindent\textbf{\red{Analysis of key modules}.}
We analyze the impact of three key components of HFILS on label allocation, as shown in Table~\ref{tab:ablation}.
w/o Scale Factor removes the class-wise scale factor $\frac{1}{n_i^{\gamma}}$ from the objective in Eq.~(\ref{objective}), which is equivalent to setting $\gamma = 0$.
w/o Soft Step Size replaces the randomized step sampling $s \sim \text{Uniform}({1,2,\ldots,s_{\max}})$ 
with a fixed step size $s_{\max}$.
w/o Soft Allocation modifies the redistribution rule 
by reallocating reduced clusters to a single target class instead of distributing them across multiple classes.
w/o Scale Factor and w/o Soft Step Size primarily contribute to imbalanced datasets (AD, JA, DA, CO, MI),
while w/o Soft Allocation can be applied to multi-class datasets (JA, DA, CO, MI).
It can be observed that removing the Scale Factor
results in a notable drop in Macro-F1, e.g., from 35.3\% to 29.2\% on JA and from 46.4\% to 35.8\% on CO.
The degradation stems from the absence of a balancing mechanism, allowing majority classes to dominate the allocation and thereby undermine the contribution of minority classes, which ultimately leads to lower average class-wise performance.
Removing randomness from the allocation search (w/o Soft Step Size) causes the accuracy to drop from 51.8\% to 36.8\% on DA and from 69.6\% to 60.9\% on CO, highlighting the importance of stochastic step-size sampling in escaping local optima and improving allocation robustness.
w/o Soft Allocation reduces accuracy from 69.6\% to 58.3\% on CO and from 53.6\% to 49.8\% on MI. This is because soft allocation promotes diverse redistribution and stabilizes the search process, both of which are crucial for effective allocation.

\myblue{Our ablation study on \red{categorical feature encoding} can be found in Appendix~\ref{ab_app}~\cite{appendix}.}

\begin{figure}[t] 
  \centering
  \includegraphics[width=0.7\linewidth]{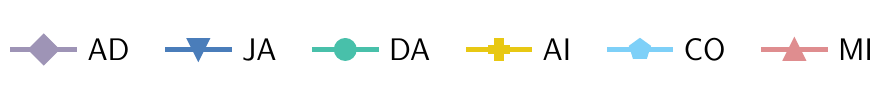}
  \begin{subfigure}[b]{0.49\columnwidth} 
    \centering
    \includegraphics[width=0.9\linewidth]{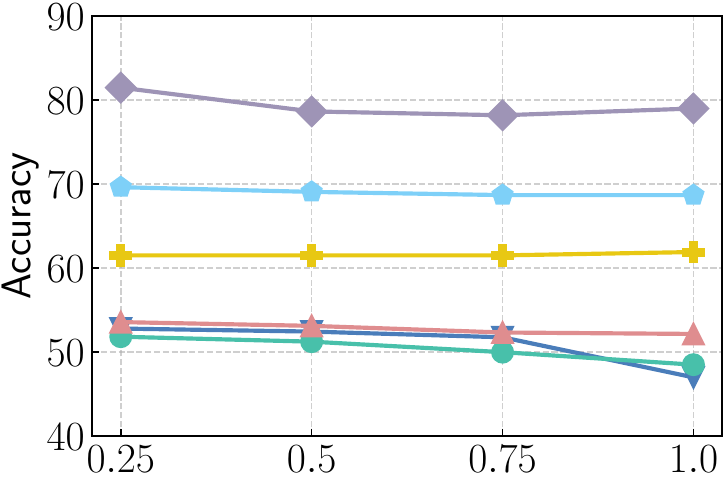}
    \caption{$\gamma$ Accuracy}
  \end{subfigure}
  \begin{subfigure}[b]{0.49\columnwidth} 
    \centering
    \includegraphics[width=0.9\linewidth]{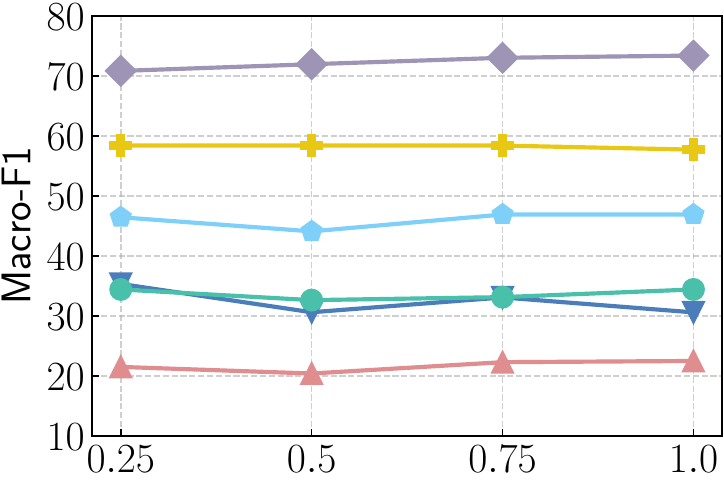}
    \caption{$\gamma$ Macro-F1}
    
    \label{fig:svg1}
  \end{subfigure}

    \vspace{1mm}
  \begin{subfigure}[b]{0.49\columnwidth} 
    \centering
     
    \includegraphics[width=0.9\linewidth]{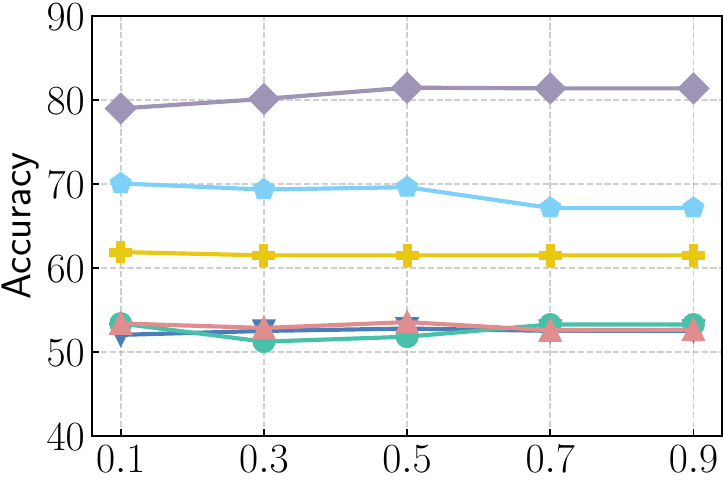}
    \caption{$l$ Accuracy}
  \end{subfigure}  
  \begin{subfigure}[b]{0.49\columnwidth} 
    \centering
    \includegraphics[width=0.9\linewidth]{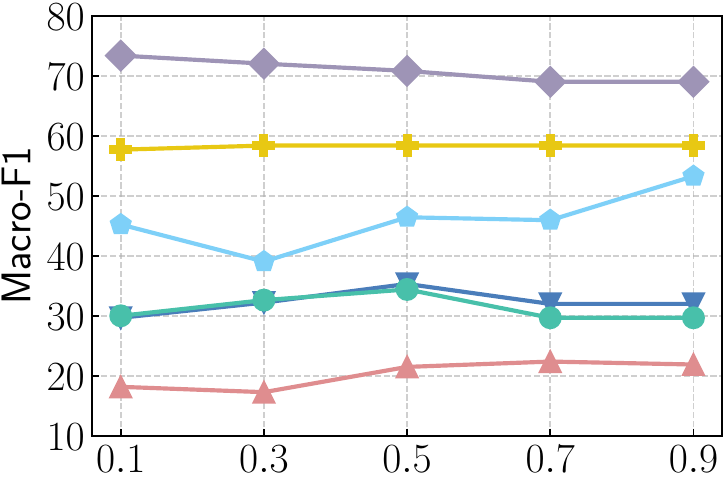}
    \caption{$l$ Macro-F1}
  \end{subfigure}

  \caption{\myblue{Parameter sensitivity analysis of $\gamma$ and $l$.}}
  \label{fig:hyperparams}
\end{figure}



\noindent

\subsection{Parameter Analysis}
\label{exp_params}
\myblue{
In this section, we further study the effect of varying the key parameters in $\text{C}^{2}\text{TC}$, including the adaptive weighting exponent $\gamma$ and the step size scaling factor $l$.
We conduct the evaluation on AD, JA, DA, AI, CO, and MI datasets.

Figure~\ref{fig:hyperparams}(a) and (b) illustrate the performance of $\text{C}^{2}\text{TC}$ for $\gamma \in \{0.25, 0.5, 0.75, 1.0\}$.
We observe that as $\gamma$ increases, accuracy generally decreases while Macro-F1 improves. This trade-off arises because smaller $\gamma$ values prioritize majority-class clustering, favoring overall accuracy, whereas larger $\gamma$ values emphasize minority-class clustering, leading to improved Macro-F1. 
Datasets with extreme imbalance ratios (e.g., JA, CO, and MI) exhibit distinct fluctuations, particularly in Macro-F1. This is because $\gamma$ effectively reweights the allocation of representatives toward minority classes; under highly imbalanced settings, even small changes in $\gamma$ can substantially affect minority-class representation. In practice, we suggest $\gamma=0.25$ as a default setting, since it already yields strong overall performance without overly sacrificing Macro-F1.
For scenarios with extreme class imbalance, we recommend setting $\gamma = 1$, as a larger $\gamma$ places greater emphasis on minority classes and improves minority-class classification performance.

Figure~\ref{fig:hyperparams}(c) and (d) show the performance of $\text{C}^{2}\text{TC}$ when varying $l \in \{0.1, 0.3, 0.5, 0.7, 0.9\}$. 
We observe that multi-class datasets (JA, DA, CO, MI) are more sensitive to changes than binary ones (AD, AI). Specifically, larger-scale datasets such as CO and MI tend to benefit from larger values of $l$, while JA and DA peak around $l=0.5$. This behavior can be attributed to the balance between exploration and exploitation: 
smaller $l$ values accelerate step-size reduction, enabling finer adjustments in later iterations, while larger $l$ values preserve broader exploration. 
In practice, we suggest $l=0.5$ as a default value, as it offers a better balance between sufficient exploration in early iterations and stable exploitation in later stages. For large-scale multi-class datasets, moderately increasing the step size ($l$) can further enhance performance by preventing premature convergence.
}

\section{Conclusion}
\label{sec:conclustion}
We propose $\text{C}^{2}\text{TC}$, the first training-free and label-adaptive tabular data condensation framework for efficient deep tabular learning.
To address the efficiency bottleneck of existing condensation methods, we reformulate the gradient-based strategies, which are typically entangled with complex optimization objectives, into a unified, training-free partition problem.
Building upon this formulation, we further extend it into a class-adaptive cluster allocation problem to support dynamic label allocation and enhance condensation flexibility.
To solve this NP-hard problem, we design a first-improvement local search algorithm guided by the elbow heuristic, enabling fast convergence to high-quality feasible solutions.
In addition, a hybrid categorical feature encoding technique is introduced to process heterogeneous discrete attributes while preserving their semantic information during clustering. 
Finally, extensive experiments on 10 real-world benchmark datasets demonstrate that $\text{C}^{2}\text{TC}$ achieves superior downstream performance, indicating higher data utility, while substantially reducing computational overhead over state-of-the-art baselines. 

\balance
\section*{AI-Generated Content Acknowledgement}
The authors used ChatGPT solely for proofreading and minor linguistic refinement.
No part of the technical content, experimental design, or analysis was generated by ChatGPT or other AI tools.

\bibliography{sample}

\clearpage       
\appendices

\label{appendix}
\section{Theoretical Proofs}
\subsection{Proof of Theorem~\ref{thm:gm-mtt}}
\begin{proof}
We consider standard gradient updates:
\begin{equation}
\scriptsize
\begin{split}
  \mathbf{W}^{\mathcal{T}}_{t+1} & = \mathbf{W}^{\mathcal{T}}_t - \eta \nabla \mathcal{L}^{\mathcal{T}}(\mathbf{W}^{\mathcal{T}}_t), \\
  \mathbf{W}^{\mathcal{S}}_{t+1} & = \mathbf{W}^{\mathcal{S}}_t - \eta \nabla \mathcal{L}^{\mathcal{S}}(\mathbf{W}^{\mathcal{S}}_t). 
\end{split}
\end{equation}
Then, the difference between the two trajectories at step $t+1$ can be calculated as:
\begin{equation}
\scriptsize
\begin{split}
\mathbf{W}^{\mathcal{S}}_{t+1} - \mathbf{W}^{\mathcal{T}}_{t+1} 
&= \left(\mathbf{W}^{\mathcal{S}}_t - \eta \nabla \mathcal{L}^{\mathcal{S}}(\mathbf{W}^{\mathcal{S}}_t)\right) - \left(\mathbf{W}^{\mathcal{T}}_t - \eta \nabla \mathcal{L}^{\mathcal{T}}(\mathbf{W}^{\mathcal{T}}_t)\right) \\
&= \left(\mathbf{W}^{\mathcal{S}}_t - \mathbf{W}^{\mathcal{T}}_t\right) - \eta \left(\nabla \mathcal{L}^{\mathcal{S}}(\mathbf{W}^{\mathcal{S}}_t) - \nabla \mathcal{L}^{\mathcal{T}}(\mathbf{W}^{\mathcal{T}}_t)\right).
\end{split}
\end{equation}
Taking the norm, we have:
\begin{equation}
\scriptsize
\label{norm}
\left\| \mathbf{W}^{\mathcal{S}}_{t+1} - \mathbf{W}^{\mathcal{T}}_{k+1} \right\| \leq \left\| \mathbf{W}^{\mathcal{S}}_t - \mathbf{W}^{\mathcal{T}}_t \right\| + \eta \left\| \nabla \mathcal{L}^{\mathcal{S}}(\mathbf{W}^{\mathcal{S}}_t) - \nabla \mathcal{L}^{\mathcal{T}}(\mathbf{W}^{\mathcal{T}}_t) \right\|.
\end{equation}
Finally, we apply Eq.~(\ref{norm}) recursively from $t=0$ to $t=T-1$, noting that $\mathbf{W}_{0}^{\mathcal{S}}=\mathbf{W}_{0}^{\mathcal{T}}$, we obtain:
\begin{equation}
\scriptsize
\left\| \mathbf{W}^{\mathcal{S}}_T - \mathbf{W}^{\mathcal{T}}_T \right\|  \leq \eta \sum_{k=0}^{T-1} \left\| \nabla \mathcal{L}^{\mathcal{S}}(\mathbf{W}^{\mathcal{S}}_t) - \nabla \mathcal{L}^{\mathcal{T}}(\mathbf{W}^{\mathcal{T}}_t) \right\|.
\end{equation}
\end{proof}

\subsection{Proof of Theorem~\ref{thm:dm-gm}}
\begin{proof}
As the gradient matching objective is commonly calculated for each class separately, we define the 
class-wise loss for $\mathcal{T}$ and $\mathcal{S}$ as:
\begin{equation}
\resizebox{\linewidth}{!}{$
\begin{aligned}
\mathcal{L}_{\mathrm{GM}} 
&= \sum_{i=0}^{n-1} \Big\|
\frac{1}{|n_i|}\nabla \mathcal{L}_i^{\mathcal{T}}(\mathbf{W}) 
- \frac{1}{|n'_i|}\nabla \mathcal{L}_i^{\mathcal{S}}(\mathbf{W})
\Big\|^2 \\
&= \sum_{i=0}^{C-1} \Big\|
\frac{1}{|n_i|}(\Phi_i^\top \Phi_i \mathbf{W} - \Phi_i^\top Y_i)
- \frac{1}{|n'_i|}({\Phi'_i}^\top \Phi'_i \mathbf{W} - {\Phi'_i}^\top Y'_i)
\Big\|^2 \\
&= \sum_{i=0}^{C-1} \Big\|
\Big(\tfrac{1}{|n_i|}\Phi_i^\top \Phi_i - \tfrac{1}{|n'_i|}{\Phi'_i}^\top \Phi'_i\Big)\mathbf{W}
- \Big(\tfrac{1}{|n_i|}\Phi_i^\top Y_i - \tfrac{1}{|n'_i|}{\Phi'_i}^\top Y'_i\Big)
\Big\|^2 \\
&\le \sum_{i=0}^{C-1} \Big\|
\tfrac{1}{|n_i|}\Phi_i^\top Y_i - \tfrac{1}{|n'_i|}{\Phi'_i}^\top Y'_i
\Big\|^2
+ \sum_{i=0}^{C-1} \Big\|
\tfrac{1}{|n_i|}\Phi_i^\top \Phi_i - \tfrac{1}{|n'_i|}{\Phi'_i}^\top \Phi'_i
\Big\|^2 \|\mathbf{W}\|^2 \\
&= \big\| \mathbf{P}\Phi(X) - \mathbf{P}'\Phi(X') \big\|^2
+ \sum_{i=0}^{C-1} \Big\|
\tfrac{1}{|n_i|}\Phi_i^\top \Phi_i - \tfrac{1}{|n'_i|}{\Phi'_i}^\top \Phi'_i
\Big\|^2 \|\mathbf{W}\|^2.
\end{aligned}
$}
\end{equation}

This decomposition reveals that distribution matching, when combined with second-order embedding alignment, provides an upper bound for class-wise gradient matching.
\end{proof}

\subsection{Proof of Theorem~\ref{thm:pgsa-nphard}}
\begin{proof}
Recall that MCKP is as follows: Given $m$ mutually disjointed groups, where group $G_{i}$ contains a set of $K_{i}$ items $\{x_{i,1},x_{i,2}, \dots, x_{i,K_{i}}\}$.
Each item $x_{i,k} \in G_{i}$ has a value $v_{i,k} \in \mathbb{R}_{\geq 0}$ and a weight $w_{i,k} \in \mathbb{Z}_{>0}$. The capacity of the knapsack is $W$.
MCKP aims to select exactly one item from each group such that the total weight $\sum_{i} w_{i,\sigma(i)} \leq W$ is within the budget and the total value $\sum_i v_{i,\sigma(i)}$ is maximized. 
Here, the mapping $\sigma: \{1,2,...,m\} \rightarrow \mathbb{Z}_{>0}$ specifies the index of the selected item in each group, i.e., $x_{i,\sigma(i)} \in G_{i}$ denotes the item chosen for group $i$. 
Based on this, we construct an instance of class-adaptive cluster allocation as: there are $m$ classes of samples which corresponds to the number of groups in MCKP.
The candidate set of cluster numbers for class $i$ is denoted as $\{n_{i,1}^{\prime},n_{i,2}^{\prime},\cdots,n_{i,K_{i}}^{\prime}\}$, where $n_{i,k}^{\prime} = k$ and $K_{i}=\max\left\{N^{\prime}-(C-1), \,n_{i}\right\}$. This defines the discrete search space for each class $i$, which is structurally equivalent
to the item set $G_{i}$ in MCKP. Each element in this set corresponds to an item in MCKP, where the sample count $n^{\prime}_{i,k}$ also plays a role of the item's weight $w_{i,k}$.
Assume that we enumerate all candidate $n^{\prime}_{i,k}$, and compute corresponding optimal clustering loss $\mathcal{L} (\{S_{j}^{i,k}\}^{*})$, which corresponds to the negative value $-v_{i,k}$ in MCKP. The total cluster budget is set to $W$, satisfying that $\sum_{i} w_{i,\sigma(i)} = W$. 
From this instance, we can derive a perfect matching for the MCKP problem. This completes the NP-hardness proof.
\end{proof}

\subsection{Proof of Theorem~\ref{thm:complexity}}
\begin{proof}
In Algorithm~\ref{alg:ifins}, the initialization stage incurs its main cost in the ClassWiseClustering (Algorithm~\ref{alg:ccap}), invoked at line~6 of Algorithm~\ref{alg:ifins}, where each class $i$ performs RunKMeans on $X_i \in \mathbb{R}^{n_i \times F}$ for $I$ iterations.
This step requires $\mathcal{O}(n_i n_i' I F)$ time, where $n_i' = \lfloor n_i \cdot r \rfloor$.
Hence, the overall initialization complexity is $\mathcal{O}(I F \sum_{i=0}^{C-1} n_i \lfloor n_i r \rfloor)$.
After initialization, the algorithm iteratively alternates between SoftAllocate (Algorithm~\ref{alg:softallocate}, invoked at line~11 of Algorithm~\ref{alg:ifins}), which runs in constant time, and ClassWiseClustering (line~12), which dominates the iteration cost.
In the worst case, clustering must be recomputed for all classes in each iteration.
Let $\bar{n}' = \frac{1}{C} \sum_{i=0}^{C-1} n_i' = \frac{N'}{C}$ denote the average number of clusters per class.
Then, over $T$ iterations, the total cost of the search phase is
$\mathcal{O}(T \sum_{i=0}^{C-1} n_i \bar{n}_i I F) = \mathcal{O}(T \cdot \frac{N N' I F}{C})$.
After convergence, Algorithm~\ref{alg:ifins} performs RunKMeans$(X_i, n_i^{\prime })$ for each class (line~22) to obtain the final partitions $\{\{S^{i}_{j}\}^{*}\}$.
This post-processing cost is negligible when $T$ is large.
The time complexity of the search over $T$ steps is $\mathcal{O}(T \cdot \frac{NN^{\prime}IF}{C})$. Overall, the time complexity of HFILS is $\mathcal{O}(IF\sum_{i=0}^{C-1}n_{i} \cdot \left\lfloor n_{i} \cdot r \right\rfloor + T \cdot \frac{NN^{\prime}IF}{C})$ = $\mathcal{O}(T \cdot \frac{NN^{\prime}IF}{C})$.
\end{proof}

\section{Related Work}
\subsection{Coreset selection}
\label{cs_rw}
Coreset selection methods aim to select a representative subset that achieves comparable model performance to training on the full dataset. 
Existing methods primarily rely on intermediate objectives that enforce distributional consistency between the subset and the original dataset~\cite{welling2009herding,feldman2011scalable,bachem2015coresets,farahani2009facility,cohen2022improved}.  
Representative approaches include herding~\cite{welling2009herding}, which greedily adds samples to minimize the feature-space distance between the centers of the subset and the full dataset, and $k$-center~\cite{farahani2009facility}, which iteratively selects samples to maximize spatial coverage.  
More recently, some methods have leveraged training signals such as gradients, loss values, or forgetting events to guide the selection of informative samples~\cite{mirzasoleiman2020coresets,killamsetty2021grad,toneva2018an,shrivastava2016training,hadar2024datamap}.  
Despite their effectiveness, coreset selection methods share inherent limitations: since they can only select a small subset of existing samples, they are constrained by the original dataset and cannot generate new, potentially more representative instances~\cite{zhao2023dataset,cazenavette2022dataset,zhao2021dataset}.  
Therefore, they fail to fully exploit the rich information in the full dataset, inevitably leading to substantial information loss~\cite{killamsetty2021grad}.  
This problem is further amplified when informative signals are broadly distributed rather than concentrated in a few samples~\cite{mirzasoleiman2020coresets}.

\section{Comparison with CGC}
\label{cgc}
\subsection{Difference in problem definition}
\myblue{The underlying problem definition and 
\red{structural assumptions} in our work differ fundamentally from those in~\cite{gao2025rethinking}.
Specifically,~\cite{gao2025rethinking} focuses on graph data condensation, aiming to distill a large-scale graph into a compact yet informative one while preserving training utility for deep graph models.
Accordingly, structural dependencies among entities are explicitly modeled by the adjacency matrix $A$ and retained in the condensed graph.
In contrast, our formulation targets tabular data composed of independent and identically distributed samples with heterogeneous feature types.
We explicitly distinguish numerical and categorical attributes, denoted as $x_{i,j}^{num}$ and $x_{i,j}^{cat}$, respectively, and aim to condense feature-heterogeneous tabular samples for efficient training of deep tabular models without modeling any inter-sample structural information.}
\subsection{Difference in theoretical framework}
\red{Our theoretical analysis process for establishing a unified condensation framework differs from~\cite{gao2025rethinking}}.
\cite{gao2025rethinking} unifies performance matching, parameter matching, and distribution matching in graph condensation, treating gradient matching as a representative instance of parameter matching and establishing DM as a proxy for PM. In contrast, we focus on independent-sample condensation and provide a finer-grained analysis of parameter matching by explicitly distinguishing gradient matching and training trajectory matching. We show that step-wise gradient discrepancies upper-bound long-range trajectory misalignment and further prove that DM upper-bounds GM, thereby yielding a unified framework encompassing MTT, GM, and DM.

\subsection{Difference in label allocation strategies}
\red{Our class partitioning objective and optimization space differ from~\cite{gao2025rethinking}.}
\cite{gao2025rethinking} focuses on graph condensation and performs partitioning on structure-aware node representations obtained via linear message passing. In contrast, we target tabular data condensation and, under a linearization simplification, conduct partitioning directly in the original sample space, since no structural dependencies exist among samples. Moreover, \cite{gao2025rethinking} considers only within-class partitioning with a fixed number of condensed samples per class, whereas we jointly optimize inter-class label allocation and intra-class clustering, leading to the Class-Adaptive Cluster Allocation Problem (CCAP), which adaptively determines the number of condensed samples per class.

\red{\section{Autoencoder for String-Type Feature Encoding}
\subsection{Computational Cost Analysis}
To assess the computational efficiency of the autoencoder module}, we empirically evaluate its runtime on three representative tabular datasets with string-type categorical features, namely Adult (AD), Diabetes130US (DA), and Airlines (AI). The results show that the encoding process is highly efficient.
In practice, the model converges within a small number of epochs and is therefore trained for a fixed 10 epochs as an offline preprocessing step.
Under this setting, the runtime is 6.8 seconds on AD, 46.3 seconds on DA, and 98.5 seconds on AI.
Overall, the proposed autoencoder serves as a lightweight and efficient preprocessing component that is trained only once per dataset, incurring a low computational cost that is negligible compared to the training overhead of prior learning-based condensation methods such as GM and DM.

\section{More Experimental settings}

\subsection{Detailed Datasets Descriptions}
\label{dataset_app}

To evaluate the effectiveness and efficiency of our approach, we conduct experiments on 10 publicly available tabular datasets from the widely adopted OpenML benchmarks~\cite{vanschoren2014openml}. 
These datasets cover diverse real-world domains, including but not limited to energy forecasting, income prediction, and flight delay analysis, and encompass heterogeneous data types, such as numerical and categorical features.
Their sizes range from the order of $10^{4}$ to $10^{6}$ instances, ensuring a comprehensive evaluation across small- to large-scale settings.
In addition, we include datasets with different label distributions, reflecting varying degrees of class imbalance commonly observed in real-world data. 
To measure this imbalance, we adopt the widely used metric known as the ratio of minimum to maximum class frequency~\cite{mcelfresh2023when}, where smaller values indicate a higher degree of imbalance. 
For all datasets, we randomly split them for 80\%/10\%/10\% as train/validation/test sets.
Table~\ref{tab:datasets} provides the detailed statistics of all datasets.

\subsection{Detailed Baselines Descriptions}
\label{comp_method_app}
We compare our method against a set of representative baselines, including coreset selection approaches (Random, Herding~\cite{welling2009herding}, and K-Center~\cite{farahani2009facility}) and dataset condensation methods (DM~\cite{zhao2023dataset}, GM~\cite{zhao2021dataset}, and MTT~\cite{cazenavette2022dataset}).
\begin{itemize}[leftmargin=*]
    \item \textbf{Random.} Random selects a subset of training samples uniformly at random from the original data as the coreset.

    \item \textbf{Herding}~\cite{welling2009herding}. Herding is a greedy algorithm that iteratively selects samples closest to the cluster center.

    \item \textbf{K-Center}~\cite{farahani2009facility}. K-Center picks multiple center points such that the largest distance between a data point and its nearest center point is minimized.

    \item \textbf{DM}~\cite{zhao2023dataset}. DM learns condensed dataset by matching the feature embeddings between real and synthetic samples. 
    
    \item \textbf{GM}~\cite{zhao2021dataset}. GM proposes to infer the synthetic dataset by matching the gradients of a model trained on synthetic dataset to that on the original
dataset.

    \item \textbf{MTT}~\cite{cazenavette2022dataset}. MTT builds a condensation dataset to match the parameter trajectory of full dataset training.
\end{itemize}
\subsection{Evaluation Metrics Details}
\label{eval_app}
To evaluate the classification performance of models trained on condensed datasets, we adopt two widely used metrics: \emph{Accuracy} and \emph{Macro-F1}.
Accuracy is defined as the proportion of correctly predicted instances:
\begin{equation}
\text{Accuracy} = \frac{TP + TN}{TP + TN + FP + FN},
\end{equation}
where $TP$, $TN$, $FP$, and $FN$ denote true positives, true negatives, false positives, and false negatives, respectively.
To further provide a balanced view of  classification performance across all classes,
we utilize Macro-F1, which is the harmonic mean of precision and recall, computed independently for each class and then averaged: 
\begin{equation}
    \text{Macro-F1} = \frac{1}{C} \sum_{c=0}^{C-1} \frac{2 \cdot \text{Precision}_c \cdot \text{Recall}_c}{\text{Precision}_c + \text{Recall}_c},
\end{equation}
where $\text{Precision}_c = \frac{TP_c}{TP_c + FP_c}$ and $\text{Recall}_c = \frac{TP_c}{TP_c + FN_c}$.
\subsection{Implementation Details}
\label{implement_app}
The pipeline of dataset condensation has two stages: (1) \textbf{Condensation step}, where we apply the DC methods on the training set to obtain small-scale synthetic data. (2) \textbf{Evaluation step}, where we train deep models on the condensed data from scratch and then evaluate their 
performance on the test set of the original data. For existing DC methods (i.e., DM, GM, and MTT) that requires model training, we use FT-Transformer~\cite{gorishniy2021revisiting}, an effective and widely adopted tabular learning method, as the relay model.  
Our effectiveness experiments consider three reduction rates (1\%, 0.1\%, and 0.01\%), with 0.1\% used as the default setting in subsequent experiments unless stated otherwise.
For $\text{C}^{2}\text{TC}$, we set the maximum iteration to $T=1000$, the tolerance to $\epsilon=0.01$, and the early-stopping patience to $p=10$.
The adaptive weighting exponent $\gamma$ is tuned over \{0.25, 0.5, 0.75, 1.0\}. Besides, we tune the step decay $l$ in \{0.1, 0.3, 0.5, 0.7, 0.9\}.
For all baselines, we report results using the official implementations with default hyperparameters specified in the original papers, and apply label encoding for categorical features. Methods that exceed the runtime limit (24 hours during condensation, or 5 days for expert trajectories generation during pre-processing in MTT) are denoted as OOT (out of time), while those that fail due to memory allocation errors are denoted as OOM (out of memory).
The experiments are conducted on a  Linux server with an Intel(R) Xeon(R) Gold 6330 CPU, a NVIDIA A800 GPU, and 128GB of RAM. 
To eliminate randomness, each experiment is repeated five times, and the average results with standard deviation are reported.

\section{Additional Experimental Results}

\subsection{Additional Performance Analysis}
\myblue{We observe that on MI, $\text{C}^{2}\text{TC}$ already recovers over 93\% accuracy of the full-data performance at $r=0.01\%$, with further increasing $r$ to $1\%$ yielding only marginal gains (e.g., from 53.2\% to 54.4\%). This behavior can be attributed to the following reasons. 
First, benefiting from class-adaptive clustering, $\text{C}^{2}\text{TC}$ is able to extract highly representative samples even at extremely low reduction rates. Second, the large-scale MI dataset exhibits substantial redundancy, allowing most informative patterns to be captured at very low reduction rates and leaving limited room for further improvement. 
Moreover, although increasing $r$ from 0.01\% to 1\% corresponds to a 100$\times$ relative increase, the absolute data scale remains small, resulting in only a modest performance
gain. 
Based on these observations, we recommend small $r$ values (e.g., $r=0.01\%$) when computational or memory budgets are limited, and $r=1\%$ as a robust default when a better balance between performance and generalization is desired.}

\myblue{ To assess the reliability of the observed improvements in accuracy (Acc) and Macro-F1 (MF1), we conduct paired t-tests~\cite{zar1999biostatistical} comparing $\text{C}^{2}\text{TC}$ with the strongest coreset and condensation baselines, namely Herding and MTT, at $r=0.1\%$, and report the corresponding p-values.
Smaller p-values indicate stronger evidence that the observed performance differences are not due to random variation, and results with $p<0.05$ are considered statistically significant. The tests were performed over 25 paired experimental runs per dataset, obtained from 5 independently generated synthetic datasets, each under 5 random seeds. As shown in Table~\ref{tab:ttest_multi}, $\text{C}^{2}\text{TC}$ achieves statistically significant gains ($p<0.05$) in accuracy on 9 out of the 10 datasets against Herding and 8 out of the 9 available datasets against MTT (excluding EP due to OOT), as well as in Macro-F1 on 8 out of the 10 and 8 out of the 9 datasets, respectively. In the few cases where improvements are not statistically significant (e.g., on the JA dataset against Herding as measured by accuracy), our method remains statistically comparable to the coreset baseline while still significantly outperforming the condensation competitor (MTT, $p=3.75\times10^{-3}$).}
\begin{table}[h]
\caption{\myblue{Paired t-test results ($p$-values) comparing $\text{C}^{2}\text{TC}$ against competitive baselines (Herding and MTT). Bold values indicate statistical significance ($p < 0.05$). Results for MTT on EP are omitted due to OOT.}}
\label{tab:ttest_multi}
\centering
\scriptsize
\renewcommand{\arraystretch}{1.12}
\begin{tabular}{c|cc|cc}
\toprule
\multirow{2}{*}{\textbf{Dataset}}
& \multicolumn{2}{c}{\textbf{C$^2$TC vs. Herding}}
& \multicolumn{2}{c}{\textbf{C$^2$TC vs. MTT}} \\
\cmidrule(lr){2-3} \cmidrule(lr){4-5}
& \textbf{Acc $p$-val} & \textbf{MF1 $p$-val}
& \textbf{Acc $p$-val} & \textbf{MF1 $p$-val} \\
\midrule
EL & \textbf{6.90$\times$10$^{-33}$} & \textbf{7.34$\times$10$^{-30}$} & \textbf{3.78$\times$10$^{-15}$} & \textbf{8.79$\times$10$^{-15}$} \\
AD & \textbf{4.01$\times$10$^{-17}$} & \textbf{3.20$\times$10$^{-20}$} & \textbf{5.37$\times$10$^{-13}$} & \textbf{5.13$\times$10$^{-10}$} \\
JA & 4.76$\times$10$^{-1}$            & \textbf{1.44$\times$10$^{-6}$}  & \textbf{3.75$\times$10$^{-3}$}  & \textbf{2.48$\times$10$^{-2}$}  \\
DA & \textbf{4.45$\times$10$^{-9}$}   & \textbf{1.05$\times$10$^{-13}$} & \textbf{9.07$\times$10$^{-8}$}  & \textbf{1.78$\times$10$^{-12}$} \\
RS & \textbf{2.28$\times$10$^{-2}$}   & 5.56$\times$10$^{-2}$            & \textbf{2.18$\times$10$^{-2}$}  & \textbf{3.60$\times$10$^{-2}$}  \\
EP & \textbf{1.31$\times$10$^{-21}$}  & \textbf{1.86$\times$10$^{-19}$} & --                              & --                              \\
AI & \textbf{1.47$\times$10$^{-32}$}  & \textbf{8.37$\times$10$^{-26}$} & \textbf{1.99$\times$10$^{-22}$} & \textbf{3.74$\times$10$^{-14}$} \\
CO & \textbf{3.07$\times$10$^{-25}$}  & \textbf{4.79$\times$10$^{-20}$} & \textbf{3.39$\times$10$^{-13}$} & \textbf{8.42$\times$10$^{-15}$} \\
HI & \textbf{5.18$\times$10$^{-17}$}  & \textbf{2.11$\times$10$^{-22}$} & \textbf{6.82$\times$10$^{-10}$} & \textbf{2.55$\times$10$^{-7}$}  \\
MI & \textbf{2.95$\times$10$^{-3}$}   & 8.84$\times$10$^{-2}$            & 3.25$\times$10$^{-1}$            & 5.62$\times$10$^{-2}$            \\

\bottomrule
\end{tabular}
\end{table}

\subsection {Additional Ablation Study}
\label{ab_app}
\myblue{
\noindent\textbf{Categorical feature encoding.}
We evaluate the effectiveness of the proposed hybrid categorical feature encoding (HCFE) by comparing it with three widely adopted encoding strategies~\cite{borisov2022deep}: (i) label encoding, (ii) one-hot encoding followed by PCA, where PCA is applied to match feature dimensionality for fair comparison, and (iii) target encoding. All encoding strategies are evaluated on three datasets with varying scales (AD, DA, and AI). As reported in Table~\ref{tab:encoding}, HCFE consistently achieves the best performance across all datasets. For example, on the AD dataset, HCFE improves accuracy by 13.8\% and Macro-F1 by 7.9\% compared to Label Encoding, and also outperforms the most competitive baselines, including Target Encoding and One-hot + PCA, in both metrics. These results demonstrate that HCFE achieves superior accuracy and class-level balance, validating its effectiveness as an encoding strategy for condensed tabular data.
Furthermore, the successful execution of the condensation process with these diverse baselines indicates that our framework is encoder-agnostic and can accommodate alternative numerical representations.
}

\begin{table}[h]
\caption{\myblue{Comparison of categorical feature encoding strategies for data condensation.}}
\label{tab:encoding}
\centering
\scriptsize
\resizebox{\linewidth}{!}
{\begin{tabular}{c cc cc cc cc}
\toprule
\multirow{2}{*}{\textbf{Dataset}}
& \multicolumn{2}{c}{\textbf{Label}}
& \multicolumn{2}{c}{\textbf{One-hot + PCA}}
& \multicolumn{2}{c}{\textbf{Target}}
& \multicolumn{2}{c}{\textbf{HCFE}} \\
\cmidrule(lr){2-3}
\cmidrule(lr){4-5}
\cmidrule(lr){6-7}
\cmidrule(lr){8-9}
& Acc & MF1 & Acc & MF1 & Acc & MF1 & Acc & MF1 \\
\midrule
AD & 67.7 & 62.9 & 74.4 & 70.0 & 78.8 & 69.9 & \textbf{81.5} & \textbf{70.8} \\
DA & 50.2 & 24.3 & 51.3 & 30.8 & 47.1 & 33.5 & \textbf{51.8} & \textbf{34.4} \\
AI & 53.1 & 52.0 & 61.0 & 56.0 & 59.1 & 58.2 & \textbf{61.5} & \textbf{58.4} \\
\bottomrule
\end{tabular}}
\end{table}


\subsection{Impact of Categorical Feature Imbalance}
\label{sec:feature-imbalance}
To evaluate the robustness of our categorical feature encoding
method (HCFE) under severe feature imbalance commonly
observed in real-world tabular datasets, we conduct experiments
to examine the effects of varying imbalance levels on the
proposed encoding and downstream model performance across
three datasets with categorical features: AD, DA, and AI.
Specifically, for each dataset, we select a categorical feature and construct training sets with controlled imbalance ratios of 1:1, 10:1, 100:1, and 1000:1 between two category values. For datasets containing a binary categorical feature, we directly use that column (e.g., sex in AD and diabetesMed in DA).
For datasets without binary categorical features (e.g., AI), we select a multi-valued categorical column (DayOfWeek) and divide its seven unique values into two groups, assigning the first four days to 0 and the remaining three days to 1, yielding a binary feature on which controlled imbalance ratios can be imposed.
These imbalance levels are achieved via a combination of label-conditioned downsampling and controlled upsampling on the selected feature. To ensure a fair and controlled comparison, all imbalance settings use the same number of training samples, strictly preserve the original label distribution, and share an identical test set, thereby isolating the effect of feature imbalance from other confounding factors.
We evaluate downstream performance following the same condensation and learning pipeline using HCFE and the two baseline encoders. Figure~\ref{fig:feature_imbalance} reports both accuracy and Macro-F1, where the x-axis denotes the feature imbalance level (e.g., 1 corresponds to 1:1, 10 to 10:1). HCFE demonstrates consistently stable performance under different degrees of categorical feature imbalance on all three datasets. For example, on the AI dataset, the accuracy remains within a narrow range of 60.21\%–61.97\%, while Macro-F1 varies only between 58.76\% and 59.81\%. This limited performance fluctuation suggests that HCFE preserves stable representation quality even under extremely skewed categorical distributions.

\begin{figure}[h] 
  \centering
  \includegraphics[width=0.7\linewidth]{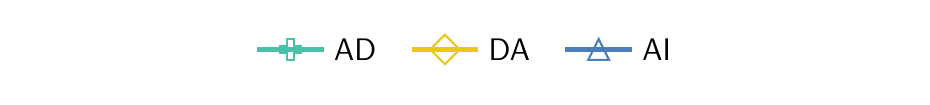}
  \begin{subfigure}[b]{0.47\columnwidth} 
    \centering
    \includegraphics[width=\linewidth]{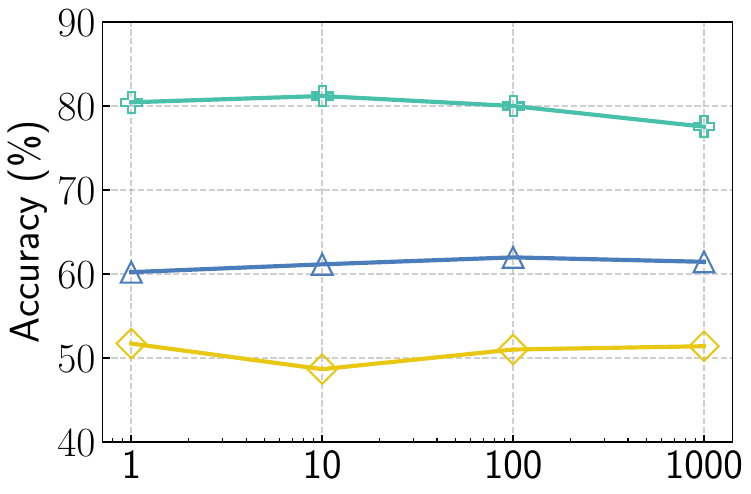}
    \caption{Accuracy}

  \end{subfigure}
    \begin{subfigure}[b]{0.47\columnwidth}
    \centering
    \includegraphics[width=\linewidth]{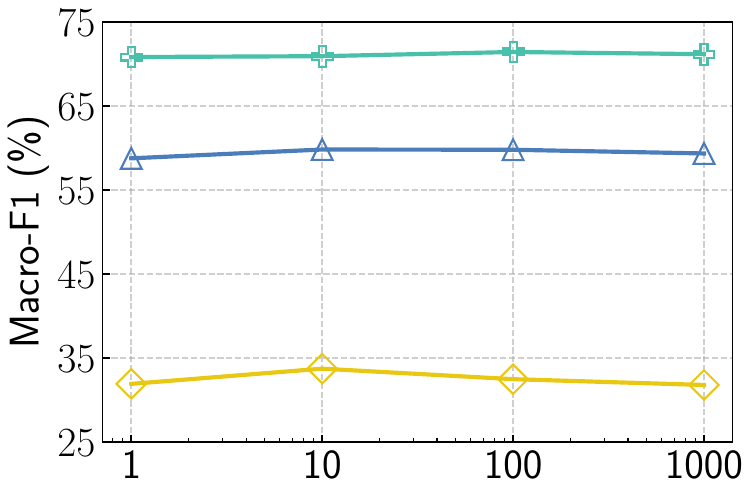}
    \caption{Macro-F1}

  \end{subfigure}
  \caption{\myblue{Impact of categorical feature imbalance.}}
  \label{fig:feature_imbalance}
\end{figure}

\clearpage 
\end{document}